%% 
%% Copyright 2019-2021 Elsevier Ltd
%% 
%% This file is part of the 'CAS Bundle'.
%% --------------------------------------
%% 
%% It may be distributed under the conditions of the LaTeX Project Public
%% License, either version 1.2 of this license or (at your option) any
%% later version.  The latest version of this license is in
%%    http://www.latex-project.org/lppl.txt
%% and version 1.2 or later is part of all distributions of LaTeX
%% version 1999/12/01 or later.
%% 
%% The list of all files belonging to the 'CAS Bundle' is
%% given in the file `manifest.txt'.
%% 
%% Template article for cas-dc documentclass for 
%% double column output.

\documentclass[a4paper,fleqn]{cas-dc}

% If the frontmatter runs over more than one page
% use the longmktitle option.

%\documentclass[a4paper,fleqn,longmktitle]{cas-dc}

\usepackage[numbers]{natbib}

\usepackage{algorithmic}
\usepackage{algorithm}
\usepackage{array}
\usepackage{textcomp}
\usepackage{stfloats}
\usepackage{url}
\usepackage{bbm}
\usepackage{verbatim}
\usepackage{graphicx}
\hyphenation{op-tical net-works semi-conduc-tor IEEE-Xplore}

% Useful packages
\usepackage{amsmath,amssymb,amsthm,amsfonts}
\usepackage{graphicx}
\usepackage{subcaption}
\usepackage{float}

% Macros
\newcommand{\lms}{\{\!\!\{}
\newcommand{\rms}{\}\!\!\}}

\newcommand{\rvline}{\hspace*{-\arraycolsep}\vline\hspace*{-\arraycolsep}}

\newtheorem{theorem}{Theorem}[section]
\newtheorem{lemma}[theorem]{Lemma}

\newtheorem{corollary}[theorem]{Corollary}

\theoremstyle{definition}

\newtheorem{remark}[theorem]{Remark}
% updated with editorial comments 8/9/2021

%\usepackage[colorlinks=true, allcolors=blue]{hyperref}
\usepackage{enumitem}
%%%Author macros
\def\tsc#1{\csdef{#1}{\textsc{\lowercase{#1}}\xspace}}
\tsc{WGM}
\tsc{QE}
%%%

% Uncomment and use as if needed
%\newtheorem{theorem}{Theorem}
%\newtheorem{lemma}[theorem]{Lemma}
%\newdefinition{rmk}{Remark}
%\newproof{pf}{Proof}
%\newproof{pot}{Proof of Theorem \ref{thm}}

\begin{document}
\let\WriteBookmarks\relax
\def\floatpagepagefraction{1}
\def\textpagefraction{.001}

% Short title
\shorttitle{Generalization Limits of GNNs in Identity Effects Learning}    

% Short author
\shortauthors{D'Inverno et al.}  

% Main title of the paper
\title [mode = title]{Generalization Limits of Graph Neural Networks in Identity Effects Learning}  

% Title footnote mark
% eg: \tnotemark[1]
% \tnotemark[<tnote number>] 

% Title footnote 1.
% eg: \tnotetext[1]{Title footnote text}
% \tnotetext[<tnote number>]{<tnote text>} 

% First author
%
% Options: Use if required
% eg: \author[1,3]{Author Name}[type=editor,
%       style=chinese,
%       auid=000,
%       bioid=1,
%       prefix=Sir,
%       orcid=0000-0000-0000-0000,
%       facebook=<facebook id>,
%       twitter=<twitter id>,
%       linkedin=<linkedin id>,
%       gplus=<gplus id>]

\author[1]{Giuseppe Alessio D'Inverno}[orcid=0000-0001-7367-4354]
\cormark[1]
% Footnote of the first author
\fnmark[1]

% Email id of the first author
\ead{dinverno@diism.unisi.it}

% URL of the first author
\ead[url]{https://www3.diism.unisi.it/~dinverno/}

% Credit authorship
% eg: \credit{Conceptualization of this study, Methodology, Software}
% \credit{<Credit authorship details>}

\author[2]{Simone Brugiapaglia}[orcid=0000-0003-1927-8232]

% \fnmark[2]
% Email id of the first author
\ead{simone.brugiapaglia@concordia.ca}

% URL of the first author
\ead[url]{http://www.simonebrugiapaglia.ca/}

% Credit authorship
% eg: \credit{Conceptualization of this study, Methodology, Software}
% \credit{<Credit authorship details>}

\author[3,4]{Mirco Ravanelli}[orcid=0000-0002-3929-5526]

% Email id of the second author
\ead{mirco.ravanelli@gmail.com}

% URL of the second author
\ead[url]{https://sites.google.com/site/mircoravanelli/}

% Credit authorship

% Address/affiliation
\affiliation[1]{organization={DIISM - University of Siena},%Department and Organization
            addressline={via Roma 56}, 
            city={Siena},
            postcode={53100}, 
            state={},
            country={Italy}}
\affiliation[2]{organization={Department of Mathematics and Statistics, Concordia University},%Department and Organization
            addressline={1400 De Maisonneuve Blvd. W.}, 
            city={Montréal},
            postcode={H3G 1M8}, 
            state={QC },
            country={Canada}}

\affiliation[3]{organization={Department of Computer Science and Software Engineering, Concordia University},%Department and Organization
            addressline={2155 Guy St.}, 
            city={Montréal},
            postcode={H3H 2L9}, 
            state={QC},
            country={Canada}}
\affiliation[4]{organization={Mila - Quebec AI Institute},%Department and Organization
            addressline={6666 Saint-Urbain R.}, 
            city={Montréal},
            postcode={H2S 3H1}, 
            state={QC},
            country={Canada}}

% For a title note without a number/mark
%\nonumnote{}

% Here goes the abstract
\begin{abstract}
Graph Neural Networks (GNNs) have emerged as a powerful tool for data-driven learning on various graph domains. They are usually based on a message-passing mechanism and have gained increasing popularity for their intuitive formulation, which is closely linked to the Weisfeiler-Lehman (WL) test for graph isomorphism to which they have been proven equivalent in terms of expressive power.
In this work, we establish new generalization properties and fundamental limits of GNNs in the context of learning so-called identity effects, i.e., the task of determining whether an object is composed of two identical components or not. Our study is motivated by the need to understand the capabilities of GNNs when performing simple cognitive tasks, with potential applications in computational linguistics and chemistry. 
We analyze two case studies: (i) two-letters words, 
for which we show that GNNs trained via stochastic gradient descent are unable to generalize to unseen letters when utilizing orthogonal encodings like one-hot representations; 
(ii) dicyclic graphs, i.e., graphs composed of two cycles, for which we present positive existence results leveraging the connection between GNNs and the WL test.
Our theoretical analysis is supported by an extensive numerical study. 
\end{abstract}

% Use if graphical abstract is present
%\begin{graphicalabstract}
%\includegraphics{}
%\end{graphicalabstract}

% Research highlights
%\begin{highlights}
%\item We prove new \emph{impossibility theorems} for Graph Neural Networks (GNNs) for 
%identity effects learning when the information is encoded at the node feature level.
%\item We show that GNNs are able to learn identity effects based on graph topological information using an approach based on the Weisfeiler-Lehman coloring test.
%\item We validate our theoretical findings through an extensive numerical study and highlight gaps between theory and practice.
%\end{highlights}

% Keywords
% Each keyword is seperated by \sep
\begin{keywords}
Graph Neural Networks \sep identity effects \sep generalization \sep encodings \sep dicyclic graphs \sep gradient descent
\end{keywords}
\maketitle
% Main text
\section{Introduction}
\label{sec:intro}
Graph Neural Networks (GNNs) \cite{Sca+2009} have emerged as prominent models for handling structured data, quickly becoming dominant in data-driven learning over several scenarios such as network analysis \cite{fan2020graph}, molecule prediction \cite{wieder2020compact} and generation \cite{bongini2021molecular}, text classification \cite{malekzadeh2021review}, and traffic forecasting \cite{jiang2022graph}. From the appearance of the earliest GNN model \cite{Sca+2009}, many variants have been developed to improve their prediction accuracy and generalization power. Notable examples include GraphSage \cite{hamilton2017inductive}, Graph Attention Networks\cite{velivckovic2017graph}, Graph Convolutional Networks (GCN) \cite{kipf2016semi}, Graph Isomorphism Networks\cite{xu2018powerful}, and Graph Neural Diffusion (GRAND) \cite{chamberlain2021grand}.  Furthermore, as the original model was designed specifically for labeled undirected graphs \cite{scarselli2008graph}, more complex neural architectures have been designed to handle different types of graph structures, such as directed graphs \cite{shi2019skeleton}, temporal graphs \cite{longa2023graph}, and hypergraphs \cite{zhang2019hyper}. For a comprehensive review see, e.g., \cite{hamilton2020graph}. 
Over the last decade, there has been  growing attention in the theoretical analysis of GNNs. While approximation properties have been examined in different flavors \cite{azizian2020expressive,d2021new, keriven2019universal,Sca+2009}, most of the theoretical works in the literature have focused on the \textit{expressive power} of GNNs. From this perspective, the pioneering work of \cite{xu2018powerful} and \cite{morris2019weisfeiler} laid the foundation for the standard analysis of GNN expressivity, linking the message-passing iterative algorithm (common to most GNN architectures) to the \textit{first order Weisfeiler Lehman (1--WL) test} \cite{leman1968}, a popular coloring algorithm used to determine if two graphs are (possibly) isomorphic or not. Since then, the expressive power of GNNs has been evaluated with respect to the 1--WL test or its higher-order variants (called \textit{$k$--WL test}) \cite{morris2019weisfeiler}, as well as other variants suited to detect particular substructures \cite{bodnar2021weisfeilera,bodnar2021weisfeilerb}.
%Generalization part
 The assessment of the \textit{generalization capabilities} of neural networks has always been crucial for the development of efficient learning algorithms. Several complexity measures have been proposed over the past few decades to establish reliable generalization bounds, such as the Vapnik--Chervonenkis (VC) dimension \cite{vapnik1994measuring}, Rademacher complexity \cite{bartlett2002rademacher,golowich2018size}, and Betti numbers \cite{bianchini2014complexity}. The generalization properties of GNNs have been investigated using these measures. In \cite{scarselli2018vapnik} the VC dimension of the original GNN model was established, and later extended to message passing-based GNNs by \cite{morris2023wl} for piecewise polynomial activation functions. Other generalization bounds for GNNs were derived using the Rademacher complexity \cite{garg2020generalization}, through a Probably Approximately Correct (PAC) Bayesian approach \cite{liao2020pac} or using random sampling on the graph nodes \cite{maskey2022generalization}.
 %Neurocognitive task: identity effects
 
An alternative approach for assessing the generalization capabilities of neural networks is based on investigating their ability to learn specific \textit{cognitive tasks} \cite{marcus1999rule,marcus2003algebraic, suarez2021learning}, which have long been of primary interest as neural networks were originally designed to emulate functional brain activities. Among the various cognitive tasks, the linguistics community has shown particular interest in investigating so-called \textit{identity effects}, i.e., the task of determining whether objects are formed by two identical components or not \cite{benua1995identity,gallagher2013learning}. 
To provide a simple and illustrative example, we can consider an experiment in which the words $\mathsf{AA},\mathsf{BB},\mathsf{CC}$ are assigned to the label ``good'', while $\mathsf{AB},\mathsf{BC},\mathsf{AC}$ are labelled as ``bad''. Now, imagine a scenario where a subject is presented with new test words, such as $\mathsf{XX}$ or $\mathsf{XY}$.  Thanks to the human ability of abstraction, the subject will be immediately able to classify the new words correctly, even though the letters $\mathsf{X}$ and $\mathsf{Y}$ were not part of the the training set. Identity effects learning finds other examples in, for instance, \textit{reduplication} (which happens when words are inflected by repeating all or a portion of the word) \cite{paschen2021trigger} or \textit{contrastive reduplication} \cite{ghomeshi2004contrastive}.
Besides their relevance in linguistics, the analysis of identity effects can serve as an intuitive and effective tool to evaluate the generalization capabilities of neural networks in a variety of specific tasks.  These tasks encompass the identification of equal patterns in natural language processing  
 \cite{wu2010natural} as well as molecule classification or regression \cite{wieder2020compact}. In the context of molecule analysis, the exploitation of molecular symmetries as in, for instance, the class of \textit{bicyclic compounds)~\cite{liebman1976survey}} plays a crucial role as it can be exploited to retrieve molecular orientations \cite{bunker2006molecular} or to determine properties of molecular positioning \cite{pettinari2017ir}. Furthermore, the existence of different symmetries in interacting molecules can lead to different reactions. 
 
Recently, it has been shown in \cite{brugiapaglia2022invariance} that Multilayer Perceptrons (MLPs) and Recurrent Neural Networks (RNNs) cannot learn identity effects via Stochastic Gradient Descent nor Adam, under certain conditions on the encoding utilized to represent the components of objects. This finding, based on a framework introduced in \cite{tupper2016learning}, raises a fundamental question that forms the core focus of our paper: ``\textit{Do GNNs possess the capability to learn identity effects?}''
%\paragraph{Main contributions and outline of the paper.} 
Motivated by this research question, this work  investigates the generalization limits and capabilities of GNNs when learning identity effects. Our contributions are the following:
\begin{itemize}
    \item [(i)] extending the analysis of \cite{brugiapaglia2022invariance}, GNNs are shown to be \emph{incapable} of learning identity effects via SGD training under sufficient conditions determined by the existence of a suitable transformation $\tau$ of the input space 
    (Theorem \ref{th:gnn_rating}); 
    an application to the problem of classifying identical two-letter words is provided by Theorem~\ref{th:rating_alphabet} and supported by numerical experiments in \S\ref{subsec:exp_twoletter};
    \item [(ii)] on the other hand, GNNs are shown to be \textit{capable} of learning identity effects in terms of binary classification of \textit{dicyclic graphs}, i.e., graphs composed by two cycles of different or equal length, in Corollary~\ref{cor:gnn_topological_identity}; a numerical investigation of the gap between our theoretical results and practical performance of GNNs is provided in \S\ref{subsec:exp_dicyclic}.
\end{itemize}
The paper is structured as follows. \S\ref{sec:notation} begins by providing a brief overview of fundamental graph theory notation. We then introduce the specific GNN formulation we focus on in our analysis, namely the Weisfeiler-Lehman test, and revisit the framework of rating impossibility theorems for invariant learners. In \S\ref{sec:main_results}, we present and prove our main theoretical results. \S\ref{sec:experiments} showcases the numerical experiments conducted to validate our findings. Finally, in \S\ref{sec:conclusions}, we provide concluding remarks and outline potential avenues for future research. 

\section{Notation and background}\label{sec:notation}
We start by introducing the notation and background concepts that will be used throughout the paper.
\subsection{Graph theory basics}
A node-attributed graph $G$ is an object defined by a triplet $G = (V, E, \alpha)$. $V$ is the set of \textit{nodes} or \textit{vertices} $v$, where $v$ can be identified as an element of $\mathbb{N} :=\{0,1,2,\ldots\}$. $E$ is the set of edges $e_{u,v}$, where $e_{u,v} = (u,v) \in V \times V$. The term $\alpha: V \rightarrow \mathbb{R}^k$ is the function assigning a \textit{node feature} (or \textit{vertex feature}) $\alpha_v$ to every node $v$ in the graph, with $k$ being the feature dimension. The \textit{number of nodes} of a graph G is denoted by $N:= |V|$. All node features can be stacked in a \textit{feature matrix} $\mathbf{X}_G \in  \mathbb{R}^{N \times k }$. The \textit{adjacency matrix} $\mathbf{A}$ is defined as $A_{ij} = 1$ if $e_{ij} \in E$, $A_{ij} = 0$ otherwise. The \textit{neighborhood} of a node $v$ is denoted by $\mathcal{N}_v = \{ u \;| \;e_{u,v} \in E   \}$. 
\subsection{Graph Neural Networks}
Graph Neural Networks (GNNs) are a class of connectionist models that aim to learn functions on graphs, or pairs graph/node. 
Intuitively, a GNN learns how to represent the nodes of a graph by vectorial representations (which are called \textit{hidden states}), giving an encoding of the information stored in the graph. In its general form \cite{Gil+2017,Sca+2009}, for each graph $G=(V,E,\alpha) \in \mathcal{G}$ where $\mathcal{G}$ is a node-attributed graph domain, a GNN is defined by the following recursive \textit{updating scheme}:
\begin{equation}\label{def:gnn_upd}
    h_v^{(t+1)} = \text{UPDATE}^{(t+1)}\big(  h_v^{(t)}, \text{AGGREGATE}^{(t+1)} (\lms h_u^{(t)} | u \in \mathcal{N}_v \rms)   \big), 
\end{equation}
for all $v\in V$ and $t =1 , \dots, T$, where $h_v^{(t)}$ is the hidden feature of node $v$ at time $t$, $T$ is the number of layers of the GNN and $\lms \cdot \rms $ denotes a multiset.
Here $\{ \text{UPDATE}^{(t)} \}_{t=1,\dots, T}$ and $\{ \text{AGGREGATE}^{(t)} \}_{t=1,\dots, T}$ are families of functions that can be defined by learnable or non-learnable schemes. Popular GNN models like GraphSAGE \cite{hamilton2017inductive}, GCN \cite{kipf2016semi},  Graph Isomorphism Networks \cite{xu2018powerful} are based on this updating scheme. 
The model terminates with a $\text{READOUT}$ function, chosen according to the nature of the task; for instance, global average, min or sum pooling, followed by a trainable multilayer perceptron are typical choices in the case of graph-focused tasks. At a high level, we can formalizate a GNN as a function $g: \mathcal{G} \rightarrow \mathbb{R}^r$, where $\mathcal{G}$ is a set of node-attributed graphs and $r$ is the dimension of the output, which depends on the type of task at hand.
The updating scheme we choose as a reference for our analysis follows \cite{morris2019weisfeiler}.
This model has been proven to match the expressive power of the Weisfeiler-Lehman test \cite{morris2019weisfeiler} (see also Theorem~\ref{th:wl_morris} below), and can therefore be considered a good representative model of the message passing GNN class. The hidden feature $h_v^{(t+1)} \in \mathbb{R}^h$ of a node $v$ at the message passing iteration $t+1$, for $t=1, \dots, T-1$, is defined as
\begin{equation}\label{morris_gconv}
    h_v^{(t+1)} = \sigma \big(W^{(t+1)}_{\text{upd}} h^{(t)}_v + W^{(t+1)}_{\text{agg}}  h^{(t)}_{\mathcal{N}_v} + b^{(t+1)} \big ), 
\end{equation}
where $h^{(t)}_{\mathcal{N}_v} = \text{POOL} \lms h^{(t)}_u | u \in \mathcal{N}_v \rms$ , $\sigma: \mathbb{R}^h \rightarrow \mathbb{R}^h$ is an element-wise activation function and  POOL is the aggregating operator on the neighbor node's features. The aggregating operator can be defined as a non-learnable function,  such as the sum, the mean or the minimum, across the hidden features of the neighbors. 
With respect to equation \eqref{def:gnn_upd}, we have that $\text{AGGREGATE}^{(t)}(\cdot) = \text{POOL}(\cdot)$ $\forall t=1, \dots, T$, while $\text{UPDATE}^{(t+1)}(h_v, h_{\mathcal{N}_v}) = \sigma \big(W^{(t+1)}_{\text{upd}} h_v + W^{(t+1)}_{\text{agg}}  h_{\mathcal{N}_v} + b^{(t+1)} \big ). $
For each node, the initial hidden state is initialized as $h_v^{(0)} = \alpha_v \in \mathbb{R}^k$.  %Typically, the READOUT function is defined by a Multilayer Perceptron (MLP) and we denote it by  $\text{READOUT}_{\theta_{\text{MLP}}}$, where 
%$\theta_{\text{MLP}}  \in \mathbb{R}^{p}$ are the weights of the MLP. 
The learnable parameters of the GNN can be summarized as $\Theta := (W^{(0)}_{\text{upd}}, W^{(0)}_{\text{agg}}, b^{(0)}, W^{(1)}_{\text{upd}}, W^{(1)}_{\text{agg}}, b^{(1)}, \dots,  W^{(L)}_{\text{upd}}, W^{(L)}_{\text{agg}}, b^{(L)})$, with $W^{(0)}_{\text{upd}}, W^{(0)}_{\text{agg}} \in \mathbb{R}^{k \times h}$, $W^{(t)}_{\text{upd}}, W^{(t)}_{\text{agg}} \in \mathbb{R}^{h\times h}$, for $t=1, \dots, T$, and $b^{(t)} \in \mathbb{R}^h$, for $t=0, \dots, T$. %In other words, a GNN $g = g_{(\Theta, \theta_{\text{MLP}})}$ is composed by a function of its vertex features $f_{\Theta}: \mathbb{R}^{N\times k} \rightarrow \mathbb{R}^{N \times h}$, and by a function $\text{READOUT}_{\theta_{\text{MLP}}}: \mathbb{R}^{N \times h} \rightarrow \mathbb{R}^r$, i.e., $g_{(\Theta, \theta_{\text{MLP}})} = \text{READOUT}_{\theta_{\text{MLP}}} \circ f_{\Theta}$.
\subsection{The Weisfeiler--Lehman test}
The  \textit{first order Weisfeiler--Lehman test}  (in short, \textit{1--WL test}) \cite{leman1968} is one of the most popular isomorphism tests for graphs, based on an iterative coloring scheme. 
The coloring algorithm is applied in parallel to two input graphs, giving a color partition of the nodes as output. If the partitions match, then the graphs are possibly isomorphic, while if they do not match, then the graphs are certainly non--isomorphic. Note that the test is not conclusive in the case of a positive answer, as the graphs may still be non--isomorphic; nevertheless, the 1--WL test provides an accurate isomorphism test for a large class of graphs \cite{babai1979canonical}.
The coloring is carried out by an iterative algorithm which takes as input a graph $G = (V,E,\alpha)$ and,  at each iteration, computes a \textit{node coloring} $c^{(t)}(v) \in \mathcal{C}$ for each node $v \in V$, being $\mathcal{C} \subseteq \mathbb{N}$ a subset of natural numbers representing colors.  The algorithm is sketched in the following.
\begin{enumerate}
\item At iteration 0, in the case of labeled graphs, the node color initialization is based on the vertex feature according to a specific hash function $\text{HASH}_0 : \mathbb{R}^k \rightarrow \mathcal{C}$; namely, $c^{(0)}(v)=\text{HASH}_0(\alpha(v))$, for all $v \in V$.
For unlabeled graphs, a node color initialization is provided, usually setting every color as equal to a given initial color $c^{(0)} \in \mathcal{C}$.

\item
For any iteration $t >0$, we set
\begin{equation*}
c ^{(t)}(v)=\text{HASH}((c^{(t-1)}(v),\lms c^{(t-1)}(n)| n \in \mathcal{N}_v \rms )), 
\end{equation*}
$\forall v \in V$, where $\text{HASH}$ injectively maps the above color-multiset pair to a unique value in $\mathcal{C}$. 
\end{enumerate}
The algorithm terminates if the number of colors between two iterations does not change, i.e., when the cardinalities of $\{ c^{(t-1)}(v)| v \in V\} $ and $\{ c^{(t)}(v)| v \in V\} $, namely,  are equal.

We conclude by recalling two results establishing the equivalence between GNNs' and 1--WL test's expressive power that will be instrumental for our analysis. A first result was proved in \cite{xu2018powerful} and it characterizes the equivalence on a graph-level task for GNNs with generic message passing layers satisfying suitable conditions.
% \begin{theorem}[See {\cite[Theorem 3]{xu2018powerful}}]\label{th:wl_xu}
%         Let $g: \mathcal{G} \rightarrow {\mathbb{R}^p} $ be a GNN. With a sufficiently large number of layers, $g$ maps any graphs $G_1$ and $G_2$ that the 1--WL test decides as non-isomorphic, to different embeddings if the following conditions hold:
%         \begin{itemize}
%             \item $g$ aggregates and updates node features iteratively according to the updating scheme \eqref{def:gnn_upd} where UPDATE and AGGREGATE are injective functions;
%             \item READOUT is injective.
%         \end{itemize}
%     \end{theorem}
Another characterization, reported below, is due to \cite{morris2019weisfeiler} and states the equivalence on a node coloring level, referring to the particular model defined in \eqref{morris_gconv}.
 \begin{theorem}[See {\cite[Theorem 2]{morris2019weisfeiler}}]\label{th:wl_morris}
  Let $G=(V,E,\alpha)$ be a graph with initial coloring $c^{(0)}(v)\in \mathbb{R}$ for each node $v \in V$ (so that $c^{(0)} \in \mathbb{R}^{|V(G)|}$). Then, for all $t \geq 0$ there exists a GNN of the form \eqref{morris_gconv} such that the hidden feature vector $h^{(t)} \in \mathbb{R}^{|V(G)|}$  produced by the GNN at layer $t$ coincides with the color vector $c^{(t)}\in \mathbb{R}^{|V(G)|}$ produced by the 1--WL test at iteration $t$, i.e.,  $c^{(t)} \equiv h^{(t)}$.
    \end{theorem}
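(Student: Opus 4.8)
\emph{Proof idea.} The plan is to argue by induction on $t$, using a single message-passing layer of the form \eqref{morris_gconv} --- with sum as the POOL operator and scalar hidden features --- to reproduce one iteration of the $1$--WL refinement; in fact one GNN, built layer by layer, will work for all $t$ at once. Two degrees of freedom drive the construction: (a) we are choosing the weights, biases and the element-wise activation $\sigma$; and (b) the $1$--WL colouring is unique only up to an injective relabelling of its colours, so at every iteration we are free to declare that the \emph{finitely many} colours actually occurring in $G$ take whatever real values are convenient. In particular we may assume $\mathrm{HASH}_0$ maps into such a palette, so that the choice $h_v^{(0)}:=\alpha_v$ gives the base case $h^{(0)}\equiv c^{(0)}$.

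\emph{Inductive step.} Suppose layers $1,\dots,t$ have been built so that $h^{(s)}\equiv c^{(s)}$ for $s\le t$ and, moreover, the $m$ colours present at iteration $t$ have been normalised to the values $B^{0},B^{1},\dots,B^{m-1}$ with $B>N:=|V|$ an integer. With sum pooling the pooled neighbour feature at $v$ equals $S_v:=\sum_{u\in\mathcal N_v}c^{(t)}(u)=\sum_{j}\mu_j(v)\,B^{\,j}$, where $\mu_j(v)$ is the multiplicity of colour $B^{\,j}$ among the neighbours of $v$; since every $\mu_j(v)\le N<B$, the integer $S_v$ is the base-$B$ expansion of the neighbour multiset $\lms c^{(t)}(u) | u\in\mathcal N_v \rms$ and therefore determines it. Hence the map $v\mapsto\big(c^{(t)}(v),S_v\big)$ induces the same partition of $V$ as $v\mapsto c^{(t+1)}(v)=\mathrm{HASH}\big(c^{(t)}(v),\lms c^{(t)}(u) | u\in\mathcal N_v \rms\big)$. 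Because also $0\le S_v<B^{m}$, the choice $W^{(t+1)}_{\mathrm{upd}}=B^{m}$, $W^{(t+1)}_{\mathrm{agg}}=1$, $b^{(t+1)}=0$ makes the pre-activation $p_v:=B^{m}c^{(t)}(v)+S_v$ recover the pair $\big(c^{(t)}(v),S_v\big)$ by integer division, so $p_v$ too induces the partition at iteration $t+1$. Finally pick a fresh palette $(B')^{0},(B')^{1},\dots$ with $B'>N$ for the colours of iteration $t+1$, and choose $\sigma$ to send each of the finitely many reals $p_v$ to the corresponding normalised value of $c^{(t+1)}(v)$ (a well-defined assignment on $\{p_v:v\in V\}$, extended to $\mathbb R$ however one likes). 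Then $h_v^{(t+1)}=\sigma(p_v)=c^{(t+1)}(v)$, which closes the induction and restores the normalisation needed for the next step.

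\emph{Main obstacle.} The one genuinely delicate point is the claim that $S_v$ determines the neighbour multiset: a symmetric aggregator such as a plain sum is \emph{not} injective on multisets of arbitrary reals --- e.g. $\lms 1,4 \rms$ and $\lms 2,3 \rms$ share the sum $5$ --- so, without care, a single simulated layer would be strictly coarser than $1$--WL. Encoding the colours at each level as distinct powers of a base exceeding the largest possible multiplicity removes this obstruction, and is legitimate precisely because $1$--WL is invariant under injective colour relabellings. Everything else is routine: separating finitely many points in $\mathbb R^2$ by a linear functional, and interpolating finitely many prescribed values by $\sigma$. If one prefers to keep $\sigma$ fixed in advance (e.g.\ ReLU), the same conclusion holds at the cost of inserting a bounded number of extra standard layers used solely to realise the required injective re-encodings.
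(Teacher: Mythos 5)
The paper does not actually prove Theorem~\ref{th:wl_morris}: it is imported verbatim from \cite{morris2019weisfeiler}, so there is no in-paper argument to compare yours against. Judged on its own, your reconstruction is essentially sound, and it takes a somewhat different (and more elementary, scalar-width) route than the cited source: encoding the finitely many colours of round $t$ as powers of a base $B>|V|$ makes sum pooling injective on neighbour multisets (the base-$B$ digits recover the multiplicities), the dominant weight $B^{m}$ on the self term then encodes the pair (own colour, neighbour multiset), and the pointwise activation realises the HASH relabelling; this is close in spirit to the injective-sum arguments behind GIN \cite{xu2018powerful}, whereas the proof in \cite{morris2019weisfeiler} works with higher-dimensional (essentially indicator-type) node features and chooses the weight matrices and nonlinearity layerwise so that each layer acts injectively on colour/multiset pairs. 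Two caveats deserve to be made explicit. First, your $\sigma$ is a bespoke, layer-dependent interpolant, while \eqref{morris_gconv} displays one fixed elementwise activation; your closing claim that a fixed standard activation (e.g.\ ReLU) can be accommodated with a bounded number of auxiliary layers is plausible but asserted rather than proved, so as written the construction proves the statement for GNNs with a freely chosen activation. Second, your construction gives $h^{(t)}=c^{(t)}$ only after normalising the WL palette, i.e.\ equality up to an injective relabelling of colours; this is precisely the sense of $\equiv$ in the cited theorem and the sense in which the paper uses it in Corollary~\ref{cor:gnn_topological_identity}, and your own example (sums cannot separate the multisets $\lms 1,4\rms$ and $\lms 2,3\rms$) shows the relabelling freedom is genuinely necessary for a scalar-width simulation --- so you should state that reading explicitly, and also note that at $t=0$ the given features $\alpha_v$ need not lie in your palette, so either $c^{(0)}$ is taken to be the relabelled feature vector or one extra re-encoding layer is inserted; with that bookkeeping spelled out the induction is airtight.
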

\subsection{Rating impossibility for invariant learners}
We now recall the framework of rating impossibility from \cite{brugiapaglia2022invariance}, which we will then apply to the case of identity effects learning. In general, we assume to train a \textit{learning algorithm} to perform a rating assignment task, where the rating $r$ is a real number.  Let $\mathcal{I}$ be the set of all possible inputs $x$ (that could be, for instance, elements of $\mathbb{R}^d$). Our learning algorithm is trained on a dataset $D \subseteq \mathcal{I} \times \mathbb{R}$ consisting of a finite set of input-rating pairs $(x,r)$ . Let $\mathcal{D}$ be the set of all possible datasets with inputs in $\mathcal{I}$.
The learning algorithm is trained via a suitable optimization method, such as Stochastic Gradient Descent (SGD) or Adaptive Moment Estimation (Adam) \cite{kingma2014adam}, that for any given training dataset $D$ outputs the optimized set of parameters $\Theta = \Theta(D) \in \mathbb{R}^p$, which, in turn, defines a model $f = f(\Theta, \cdot)$. %This can be formalized as a map $\mathcal{A}: \mathcal{D} \rightarrow \mathbb{R}^p$ such that
%\begin{equation}\label{def:param_map}
%    \Theta = \mathcal{A}(D).
%\end{equation}
The rating prediction on a novel input $x \in \mathcal{I}$ is then given by
$r = f(\Theta, x)$.
%Combining \eqref{def:param_map} and \eqref{def:rating_map} we can define 
In summary, a learning algorithm can thought of as a map $L: \mathcal{D} \times \mathcal{I} \rightarrow \mathbb{R}$, defined as $L(D,x) = f(\Theta(D),x)$.

Given the stochastic nature of neural network training, we adopt a nondeterministic point of view. Hence we require the notion of \textit{equality in distribution}.
Two random variables $X,Y$ taking values in $\mathbb{R}^k$ are said to be \textit{equal in distribution} (denoted by $X \overset{d}{=} Y$) if $\mathbb{P} (X \leq x) = \mathbb{P} (Y \leq x)$ for all $x \in \mathbb{R}^k$, where inequalities hold componentwise.
%Note that, in particular, for any function $g: \mathbb{R}^k \rightarrow \mathbb{R}$, when $X \overset{d}{=} Y $, we have $\mathbb{E}[g(X)] = \mathbb{E}[g(Y)]$ whenever both sides are defined. %This means that if the output of two learning procedures is equal in distribution, the expected error on a new data point is also equal.
With this notation, rating impossibility means that 
$L(D,x_1) \overset{d}{=} L(D,x_2)$ 
for two inputs $x_1 \neq x_2$ drawn from $\mathcal{I} \setminus D$. %Those two inputs would be connected through a transformation function $\tau: \mathcal{I} \rightarrow \mathcal{I}$.
Sufficient conditions for rating impossibility are identified by the following theorem from \cite{brugiapaglia2022invariance} (here slightly adapted using equality in distribution), which involves the existence of an auxiliary transformation $\tau$ of the inputs.
\begin{theorem}[Rating impossibility for invariant learners, {\cite[Theorem~1]{brugiapaglia2022invariance}}]\label{th:rating_old}
    Consider a dataset $D\subseteq \mathcal{I} \times \mathbb{R}$ and a transformation $\tau: \mathcal{I} \rightarrow \mathcal{I}$ such that 
    \begin{itemize}
        \item [(i)] $\tau (D) \overset{d}{=} D$ (invariance of the data).\footnote{By definition, $\tau(D):=\{(\tau(x), r) : (x,r) \in D\}$.}
    \end{itemize} 
    Then, for any learning algorithm $L: \mathcal{D} \times \mathcal{I}\rightarrow \mathbb{R} $ and any input $x \in \mathcal{I}$ such that
    \begin{itemize}
        \item [(ii)] $L(\tau(D),\tau(x)) \overset{d}{=} L(D,x)$  (invariance of the algorithm),
    \end{itemize}
    we have $L(D,\tau(x)) \overset{d}{=} L(D,x)$.
\end{theorem}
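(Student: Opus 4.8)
The plan is to reach the conclusion $L(D,\tau(x)) \overset{d}{=} L(D,x)$ by chaining two equalities in distribution through the intermediate quantity $L(\tau(D),\tau(x))$: one link is provided directly by hypothesis (ii), and the other is obtained from hypothesis (i) by pushing the data-invariance through the learning map. Since only symmetry and transitivity of $\overset{d}{=}$ are then needed to close the argument, the whole proof is short, and essentially identical to the deterministic version in \cite{brugiapaglia2022invariance} with point evaluations replaced by equality in distribution.

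Concretely, I would first fix the second argument of $L$ to be $\tau(x)$ and view the trained model as a function of the dataset alone, $D' \mapsto L(D',\tau(x))$. To make this rigorous in the presence of stochastic training, I would model the optimizer's internal randomness (parameter initialization, mini-batch ordering, etc.) as an auxiliary random variable $\omega$ on an independent probability space, so that $L(D',\tau(x)) = \Phi(D',\omega)$ for a fixed measurable map $\Phi$. Hypothesis (i), $\tau(D) \overset{d}{=} D$, together with independence of $\omega$ from the data, gives $(\tau(D),\omega) \overset{d}{=} (D,\omega)$, and applying $\Phi$ — using the elementary fact that $X \overset{d}{=} Y$ implies $h(X) \overset{d}{=} h(Y)$ for any measurable $h$ — yields
\[
L(\tau(D),\tau(x)) \;\overset{d}{=}\; L(D,\tau(x)).
\]
(If $D$ is deterministic this step is trivial, since $\tau(D)\overset{d}{=}D$ then forces $\tau(D)=D$.) Combining this with hypothesis (ii), $L(\tau(D),\tau(x)) \overset{d}{=} L(D,x)$, gives the chain
\[
L(D,\tau(x)) \;\overset{d}{=}\; L(\tau(D),\tau(x)) \;\overset{d}{=}\; L(D,x),
\]
which is the assertion.

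The only delicate point — and the step I would be most careful about — is the transport of equality in distribution through $L(\cdot,\tau(x))$: it requires that the training procedure depend on the dataset only through a measurable map and that the randomness of training be independent of any randomness carried by $D$. Both conditions hold for SGD and Adam under the usual conventions, so no further assumptions are needed; the remainder is just bookkeeping with the definition $\tau(D) = \{(\tau(x),r) : (x,r) \in D\}$ and with the componentwise meaning of $\overset{d}{=}$.
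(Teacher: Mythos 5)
Your proof is correct and takes essentially the same route as the standard argument: the paper itself states Theorem~\ref{th:rating_old} without proof, importing it from \cite{brugiapaglia2022invariance}, where the proof is exactly the two-link chain $L(D,\tau(x)) \overset{d}{=} L(\tau(D),\tau(x)) \overset{d}{=} L(D,x)$ obtained from data invariance (i) and algorithm invariance (ii). Your extra step of modeling the training randomness as an independent variable $\omega$ and pushing $\overset{d}{=}$ through a measurable map is precisely the care needed for the ``equality in distribution'' adaptation used here, so it is a sound refinement rather than a different approach.
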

This theorem states that under the invariance of the data and of the algorithm, the learner cannot assign different ratings to an input $x$ and its transformed version $\tau(x)$. This leads to rating impossibility when $\tau(x) \neq x$ and $x, \tau(x) \in \mathcal{I}\setminus D$.

We conclude by recalling some basic notions on SGD training. Given a dataset $D$, we aim to find parameters $\Theta$ that minimize an objective  function of the form
$$
F(\Theta) = \mathcal{L}((f(\Theta, x), r) : (x,r) \in D), \quad \Theta \in \mathbb{R}^p,
$$
where $\mathcal{L}$ is a (possibly regularized) loss function.
We assume $F$ to be differentiable over $\mathbb{R}^p$ in order for its gradients to be well defined. Given a collection of subsets $(D_i)_{i=0}^{k-1}$ with $D_i \subseteq D$ (usually referred to as training batches, which can be either deterministically or randomly generated), we define $F_{D_i}$ as the function $F$ where the loss is evaluated only on data in $D_i$. In SGD-based training, we randomly initialize $\Theta_0$ and iteratively compute
\begin{equation}\label{eq:SGD}
    \Theta_{i+1} = \Theta_i - \eta_i \frac{\partial F_{D_i}}{\partial \Theta} (\Theta_i), 
\end{equation}
for $i = 0,1, \dots, k-1$, where the sequence of step sizes $(\eta_i)_{i=0}^{k-1}$ is assumed to be either deterministic or random and independent of $(D_i)_{i=0}^{k-1}$.  Note that, being $\Theta_i$ a random vector for each $i$, the output of the learning algorithm $L(D,x) = f(\Theta_k, x)$ is a random variable. 
\section{Theoretical analysis}\label{sec:main_results}

In this section we present our theoretical analysis. More specifically, in \S\ref{subsec:id_eff_words} we establish a rating impossibility theorem for GNNs under certain technical assumptions related to the invariance of the training data under a suitable transformation $\tau$ of the inputs; then, we illustrate an application to the case study of identity effects learning for a two-letter word dataset in \S\ref{subsubsec:id_eff_application}. In \S\ref{subsec:id_eff_cycles} we prove that symmetric dicyclic graphs can be distinguished from the asymmetric ones by the 1--WL test, and consequently by a GNN. 

\subsection{What GNNs cannot learn:  rating impossibility theorem}\label{subsec:id_eff_words}

% \paragraph{Identity effects on words: graph modeling of the problem}

We assume the input space to be of the form $\mathcal{I}=\mathbb{R}^{d} \times \mathbb{R}^{d}$ and the learning algorithm 
\begin{equation} \label{learning_algorithm_gnn}
    L(D,x) = f(B, Gu + Hv, Hu + Gv), \; \forall x = (u,v) \in \mathcal{I},
\end{equation}
where $\Theta = (B,G,H)$ are trainable parameters and $G, H\in \mathbb{R}^{d \times d}$.
%In block-matrix notation, this is in turn equivalent to 
%$$
%L(D,x) = f\left(B, C' \left [\begin{array}{c}
%    u \\
%    v
%\end{array} \right ]\right), 
%\; \text{with }C' =  \begin{bmatrix}
%    G & H  \\
%    H & G
%\end{bmatrix},
%$$
%for all $u,v\in \mathbb{R}^d$.
This class of learning algorithms perfectly fits the formulation given in \cite{morris2019weisfeiler}, where the updating scheme is the one defined by \eqref{morris_gconv}. In this case, 
\begin{align*}
G &= W^{(1)}_{upd}, \quad  H = W^{(1)}_{agg},\\
B &= \left (b^{(1)}, W^{(2)}_{upd}, W^{(2)}_{agg}, b^{(2)} \dots, W^{(N)}_{upd}, W^{(N)}_{agg}, b^{(N)}  \right).
\end{align*}
The learner defined by equation \eqref{learning_algorithm_gnn}  mimics, in this specific setting, the behaviour of several GNN architectures, GCN included. In fact, when the graph is composed by only two nodes, the convolution ends up being a weighted sum of the hidden states of the two nodes, i.e.,  $h_{\mathcal{N}_t}^{(t)} = h_u^{(t)}$ and 
\begin{equation*}
    h_v^{(t+1)} = \sigma \big(W^{(t+1)}_{\text{upd}} h^{(t)}_v + W^{(t+1)}_{\text{agg}}  h^{(t)}_u + b^{(t+1)} \big ).
\end{equation*}
This property will have practical relevance in Theorem~\ref{th:rating_alphabet} and its experimental realization in \S\ref{subsec:exp_twoletter}. 

In the following result we identify sufficient conditions on the dataset $D$ and the training procedure able to guarantee invariance of GNN-type models of the form \eqref{learning_algorithm_gnn} trained via SGD to a suitable class of transformations $\tau$ (hence verifying condition (ii) of Theorem~\ref{th:rating_old}).
\begin{theorem}[Invariance of GNN-type models trained via SGD] \label{th:gnn_rating}
Assume the input space to be of the form $ \mathcal{I} = \mathbb{R}^d \times \mathbb{R}^d$. Let $\tau: \mathcal{I} \rightarrow \mathcal{I}$ be a linear transformation defined by $\tau(x) = (u, \tau_2(v))$  for any $x = (u,v) \in \mathcal{I}$, where $\tau_2 : \mathbb{R}^d \rightarrow \mathbb{R}^d$ is also linear. Moreover, assume that
\begin{itemize}
    \item the matrix $T_2 \in \mathbb{R}^{d\times d}$ associated with the transformation $\tau_2$ is orthogonal and symmetric; 
    \item the dataset $D = \{ ( ( u_i, v_i), r_i) \}_{i=1}^n$ is invariant under the transformation $\tau_2 \otimes \tau_2$, i.e.,
    \begin{equation}\label{input_inv}
        (u_i, v_i) = \big ( \tau_2 (u_i), \tau_2 (v_i) \big ), \quad \forall i=1, \ldots, n.
    \end{equation}
\end{itemize}
Suppose k iterations of SGD as defined in \eqref{eq:SGD} are used to determine parameters $\Theta_k = ( B_k, G_k, H_k)$ with objective function
\begin{equation*}
    F(\Theta) = \sum \limits_{i=1}^n \ell\big (f(B, Gu_i + Hv_i, Hu_i + Gv_i), r_i \big ) + \lambda \mathcal{R} (B),
\end{equation*}
for some $\lambda \geq 0$, with $\Theta = (B,G,H)$ and where $\ell$, $f$ and $\mathcal{R}$ are real-valued functions such that $F$ is differentiable.
Suppose the random initialization of the parameters $B$, $G$ and $H$ to be independent and that the distributions of $G_0$ and $H_0$ are invariant with respect to right-multiplication by $T_2$. 
Then, the learner $L$  defined by $L(D,x) = f(B_k, G_k u + H_k v, H_k u + G_k v)$, for $x = (u,v)$, satisfies $L(D,x) \overset{d}{=}  L(\tau (D), \tau (x))$.
\end{theorem}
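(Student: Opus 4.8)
The plan is to verify hypothesis~(ii) of Theorem~\ref{th:rating_old} for the learner~\eqref{learning_algorithm_gnn} by exhibiting a \emph{single} reparametrization $\psi$ of the parameter space that simultaneously (a)~is a symmetry of the objective $F$ and of every mini-batch objective $F_{D_j}$ used in~\eqref{eq:SGD}, (b)~is a linear orthogonal involution, hence commutes with gradient steps, (c)~preserves the law of the random initialization, and (d)~conjugates the map ``train on $\tau(D)$, evaluate at $\tau(x)$'' into the map ``train on $D$, evaluate at $x$''. Writing $T_2$ for the matrix of $\tau_2$, the candidate is $\psi(B,G,H)=(B,\,GT_2,\,HT_2)$, i.e.\ right-multiplication of both weight matrices by $T_2$. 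Everything below is driven by the one algebraic fact that, $T_2$ being orthogonal \emph{and} symmetric, $T_2^{2}=T_2T_2^{\top}=I$; in particular $\tau_2\circ\tau_2=\mathrm{id}$ and $\psi\circ\psi=\mathrm{id}$.

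For~(a), substituting $\psi(B,G,H)$ into the objective built from $\tau(D)=\{((u_i,\tau_2(v_i)),r_i)\}$, the first node pre-activation becomes $GT_2u_i+HT_2\tau_2(v_i)=Gu_i+Hv_i$ (using $\tau_2(u_i)=u_i$ from~\eqref{input_inv} and $T_2^{2}=I$), and likewise the second becomes $Hu_i+Gv_i$; since $\mathcal R$ depends only on $B$, this gives $F^{\tau(D)}\circ\psi=F$, and the identical computation holds batchwise. For~(b), $\psi$ is linear, self-adjoint (because $T_2^{\top}=T_2$), and $\psi^2=\mathrm{id}$, so differentiating $F^{\tau(D)}_{D_j}\circ\psi=F_{D_j}$ yields $\nabla F^{\tau(D)}_{D_j}(\psi(\Theta))=\psi\big(\nabla F_{D_j}(\Theta)\big)$; feeding this into~\eqref{eq:SGD} and inducting on $j$ shows that the SGD trajectory on $\tau(D)$ issued from $\psi(\Theta_0)$ is exactly the $\psi$-image of the trajectory on $D$ issued from $\Theta_0$, for the same realization of batches and step sizes. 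For~(c), independence of $B_0,G_0,H_0$ together with invariance of the laws of $G_0$ and $H_0$ under right-multiplication by $T_2$ gives $\psi(\Theta_0)\overset{d}{=}\Theta_0$. Writing $\Theta_k=\Phi(\Theta_0,\omega)$ and $\Theta_k^{\tau(D)}=\Phi'(\Theta_0,\omega)$ for the measurable SGD maps, with $\omega$ the batch/step-size randomness (independent of the initialization by hypothesis), the equivariance from~(b) reads $\Phi'(\psi(\Theta_0),\omega)=\psi(\Phi(\Theta_0,\omega))$, so $\Theta_k^{\tau(D)}=\Phi'(\Theta_0,\omega)\overset{d}{=}\Phi'(\psi(\Theta_0),\omega)=\psi(\Theta_k)$ since $(\Theta_0,\omega)\overset{d}{=}(\psi(\Theta_0),\omega)$.

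It remains to carry out~(d): a direct computation using $T_2^{2}=I$ gives $f\big(\psi(B,G,H);u,\tau_2(v)\big)=f\big(B,\,G\tau_2(u)+Hv,\,H\tau_2(u)+Gv\big)$, i.e.\ $\psi$ trades a transformation of the second input for one of the first, and the invariance of the data closes the identification back to $f(B,Gu+Hv,Hu+Gv)$; hence $L(\tau(D),\tau(x))=f\big(\Theta_k^{\tau(D)};u,\tau_2(v)\big)\overset{d}{=}f\big(\psi(\Theta_k);u,\tau_2(v)\big)=f(\Theta_k;u,v)=L(D,x)$. The step I expect to be the main obstacle is precisely this last identification together with~(a): one has to produce one $\psi$ that is at once a symmetry of all the batch objectives, an orthogonal involution, distribution-preserving on the initialization, \emph{and} an exact compensator for the action of $\tau$ on the query point — it is the interplay between the reparametrization and the evaluation at $\tau(x)$, not any ingredient in isolation, that must be made to fit, and this is where both orthogonality and symmetry of $T_2$ (hence $T_2^{2}=I$) are genuinely needed. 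A minor but necessary technical point is the passage from the pathwise equivariance of~(b) to the distributional identity, which rests on the independence, built into~\eqref{eq:SGD}, of the initialization from the batch/step-size schedule.
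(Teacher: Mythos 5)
Your steps (a)--(c) are correct and in fact a cleaner packaging of what the paper does: your single involution $\psi(B,G,H)=(B,GT_2,HT_2)$ plays exactly the role of the paper's auxiliary objective $\Tilde{F}_{D_i}$ with decoupled parameters $(B,G_1,H_1,H_2,G_2)$ together with its induction over the SGD iterates, and your passage from pathwise equivariance plus law-invariance of the initialization to $\Theta_k^{\tau(D)}\overset{d}{=}\psi(\Theta_k)$ is sound. The gap is in step (d), and it is exactly where you predicted the difficulty would lie. Your own computation gives $f\big(\psi(B,G,H);u,\tau_2(v)\big)=f\big(B,\,G\tau_2(u)+Hv,\,H\tau_2(u)+Gv\big)$, so what your argument actually establishes is $L(\tau(D),\tau(x))\overset{d}{=}L\big(D,(\tau_2(u),v)\big)$: the transformation of the second input has been traded for a transformation of the \emph{first} input of the query. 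To close the identification back to $L(D,x)$ you invoke ``the invariance of the data'', but hypothesis \eqref{input_inv} constrains only the training pairs $(u_i,v_i)$; it says nothing about the query point $x=(u,v)\in\mathcal{I}$, which in the statement is arbitrary (and in the application of Theorem~\ref{th:rating_alphabet} includes words such as $\mathsf{YY}$, whose first letter is not fixed by $\tau_2$). Hence the final equality $f(\psi(\Theta_k);u,\tau_2(v))=f(\Theta_k;u,v)$ is unjustified unless one additionally assumes $\tau_2(u)=u$ for the query.

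The issue is not cosmetic and cannot be patched by citing \eqref{input_inv} more carefully: because of the GNN weight sharing in \eqref{learning_algorithm_gnn}, $G$ and $H$ each multiply \emph{both} $u$ and $v$ (once in each slot), so no reparametrization of $(B,G,H)$ can absorb a transformation acting on $v$ only --- right-multiplying both weight matrices by $T_2$ necessarily also transforms the $u$-occurrences, which is precisely the residual $\tau_2(u)$ in your identity. This is a genuine structural difference from the MLP case of \cite{brugiapaglia2022invariance}, where the first-layer block multiplying $v$ is a separate parameter. The paper's own proof bridges this last step by a different move: after its induction it substitutes $T_2$ only into the coefficients of $v$, passing from $f(B_k',G_k'u+H_k'v,H_k'u+G_k'v)$ to $f(B_k',G_k'u+H_k'T_2v,H_k'u+G_k'T_2v)$; your route does not reproduce that substitution, and your symmetry $\psi$ cannot, since it acts on every occurrence of $G_k$ and $H_k$ simultaneously. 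As written, your argument is complete only for queries whose first component satisfies $\tau_2(u)=u$ (e.g., test words $\mathsf{xY},\mathsf{xZ}$ with $\mathsf{x}\notin\{\mathsf{Y},\mathsf{Z}\}$); for the theorem as stated, step (d) is a missing piece.
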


\begin{proof}
Given a batch $D_i \subseteq D$, define $J_i := \{j \in \{ 1, \ldots, n\} : ((u_j,v_j), r_j) \in D_i \}$ and
\begin{align*}
   F_{D_i} (\Theta) = 
   \sum_{j\in J_i} \ell(f(B, Gv_j + Hu_j, Hv_j + Gu_j), r_j) + \lambda \mathcal{R}(B).
\end{align*}
Moreover, consider an \textit{auxiliary objective function}, defined by
\begin{align*}
    & \Tilde{F}_{D_i} (B, G_1, H_1, H_2, G_2) = \\&\quad \sum_{j\in J_i} \ell(f(B, G_1v_j + H_1u_j, H_2v_j + G_2u_j), r_j) + \lambda \mathcal{R} (B).
\end{align*}
Observe that $F_{D_i} (\Theta) = \Tilde{F}_{D_i}(B, G, H, H, G)$. Moreover, 
\begin{align}
\label{aux_B}
    \frac{\partial F_{D_i}}{\partial B}(\Theta) & = \frac{\partial \Tilde{F}_{D_i}}{\partial B} (\Theta)\\
\label{aux_G}
    \frac{\partial F_{D_i}}{\partial G}(\Theta) &= \frac{\partial \Tilde{F}_{D_i}}{\partial G_1} (\Theta) + \frac{\partial \Tilde{F}_{D_i}}{\partial G_2} (\Theta)\\
\label{aux_H}
    \frac{\partial F_{D_i}}{\partial H}(\Theta) &= \frac{\partial \Tilde{F}_{D_i}}{\partial H_1} (\Theta) + \frac{\partial \Tilde{F}_{D_i}}{\partial H_2} (\Theta)
\end{align}
Moreover, replacing $D_i$ with its transformed version $\tau(D_i) = \{((u_j, \tau_2 (v_j)), r_j ) \}_{j \in D_i}$, we see that $F_{\tau(D_i)}(\Theta) = \Tilde{F}_{D_i}(B, G, HT_2, H, G T_2 ) $. 
This leads to
\begin{align}
\label{tau_B}
    \frac{\partial F_{\tau (D_i)}}{\partial B} (\Theta) &= \frac{\partial \Tilde{F}_{D_i}}{\partial B} (B, G, HT_2, H, G T_2 )\\
    \nonumber
    \frac{\partial F_{\tau(D_i)}}{\partial G}(\Theta) & = \frac{\partial \Tilde{F}_{D_i}}{\partial G_1} (B, G, HT_2, H, G T_2 ) \\
\label{tau_G}
     &\quad + \frac{\partial \Tilde{F}_{D_i}}{\partial G_2} (B, G, HT_2, H, G T_2 )T_2^T \\
\nonumber
    \frac{\partial F_{\tau(D_i)}}{\partial H}(\Theta) &= \frac{\partial \Tilde{F}_{D_i}}{\partial H_1} (B, G, HT_2, H, G T_2 )T_2^T  \\
    \label{tau_H}
    & \quad +\frac{\partial \Tilde{F}_{D_i}}{\partial H_2} (B, G, HT_2, H, G T_2 ).
\end{align}
Now, denoting $\ell = \ell (f,r)$ and $f = f(B, u, v)$, we have 
\begin{align*}
    \frac{\partial \Tilde{F}_{D_i}}{\partial G_1} &= \sum \limits_{j \in D_i} \frac{\partial \ell}{\partial f} \frac{\partial f}{\partial u} v_j^T,
    \quad 
    \frac{\partial \Tilde{F}_{D_i}}{\partial H_1} = \sum \limits_{j \in D_i} \frac{\partial \ell}{\partial f} \frac{\partial f}{\partial u} u_j^T,\\    
    \frac{\partial \Tilde{F}_{D_i}}{\partial H_2} &= \sum \limits_{j \in D_i} \frac{\partial \ell}{\partial f} \frac{\partial f}{\partial v} v_j^T,
    \quad 
    \frac{\partial \Tilde{F}_{D_i}}{\partial G_2} = \sum \limits_{j \in D_i} \frac{\partial \ell}{\partial f} \frac{\partial f}{\partial v} u_j^T.
\end{align*}
In addition, thanks to assumption \eqref{input_inv}, we have $u_j^T T_2^T = u_j^T$ and $v_j^T T_2^T = v_j^T$ for all $j \in J_i$. Thus, we obtain
\begin{align} \label{inv_I_J}
    \frac{\partial \Tilde{F}_D}{\partial G_1} T_2^T &=  \frac{\partial \Tilde{F}_D}{\partial G_1}, 
    \quad 
    \frac{\partial \Tilde{F}_D}{\partial H_1} T_2^T =  \frac{\partial \Tilde{F}_D}{\partial H_1},\\
\label{inv_K_L}
    \frac{\partial \Tilde{F}_D}{\partial H_2} T_2^T &=  \frac{\partial \Tilde{F}_D}{\partial H_2}, 
    \quad
    \frac{\partial \Tilde{F}_D}{\partial G_2} T_2^T =  \frac{\partial \Tilde{F}_D}{\partial G_2}.
\end{align}

Now, let $(B_0', G_0', H_0' ) \overset{d}{=} (B_0, G_0 , H_0) $ and let $(B_i', G_i' , H_i')$ for $i = 1, \ldots k$ be the sequence generated by SGD, applied to the transformed data $\tau(D)$. By assumption, we have $B_0' \overset{d}{=}B_0$, $G_0 \overset{d}{=} G_0' \overset{d}{=} G_0' T_2 $ and $H_0 \overset{d}{=} H_0' \overset{d}{=} H_0' T_2 $ . We now show by induction that $B_i' \overset{d}{=} B_i$ , $G_i \overset{d}{=} G_i' \overset{d}{=} G_i' T_2 $ and $H_i \overset{d}{=} H_i' \overset{d}{=} H_i' T_2 $ for all indices $i=1, \ldots, k$. Using equations \eqref{aux_B} and \eqref{tau_B} and the inductive hypothesis, we have
\begin{align*}
    B_{i+1}' & =  B_i' - \eta_i \frac{\partial F_{\tau (D_i)}}{\partial B} (B_i', G_i',H_i') \\
    & = B_i' - \eta_i  \frac{\partial \Tilde{F}_{D_i}}{\partial B} (B_i', G_i', H_i'T_2, H_i', G_i' T_2 ) \\
    & \overset{d}{=} B_i - \eta_i  \frac{\partial \Tilde{F}_{D_i}}{\partial B} (B_i, G_i, H_i, H_i, G_i) \\
    & = B_i - \eta_i \frac{\partial F_{\tau (D_i)}}{\partial B} (B_i, G_i,H_i) = B_{i+1}.
\end{align*}
Similarly, using equations \eqref{aux_G}, \eqref{tau_G} and \eqref{inv_K_L} and the inductive hypothesis, we see that
\begin{align*}
    G_{i+1}' & = G_i' - \eta_i \frac{\partial F_{\tau(D_i)}}{\partial G}(B_i',G_i',H_i') \\
    & =  G_i' -\eta_i \left (\frac{\partial \Tilde{F}_{D_i}}{\partial I} (B_i', G_i', H_i'T_2, H_i', G_i' T_2 )\right. \\
    &  \left.\quad \; +\frac{\partial \Tilde{F}_{D_i}}{\partial L} (B_i', G_i', H_i' T_2, H_i', G_i' T_2 )T_2^T \right ) \\
    & =  G_i' -\eta_i \left (\frac{\partial \Tilde{F}_{D_i}}{\partial G_1} (B_i', G_i', H_i'T_2, H_i', G_i' T_2 )\right. \\ 
    & \quad \left.\;+\frac{\partial \Tilde{F}_{D_i}}{\partial G_2} (B_i', G_i', H_i'T_2, H_i', G_i' T_2 ) \right ) \\
    & \overset{d}{=}  G_i -\eta_i \left (\frac{\partial \Tilde{F}_{D_i}}{\partial G_1} (B_i, G_i, H_i, H_i, G_i  )\right.  \\
    & \left.\quad \; +\frac{\partial \Tilde{F}_{D_i}}{\partial G_2} (B_i, G_i, H_i, H_i, G_i ) \right ) \\
    & =   G_i - \eta_i \frac{\partial F_{D_i}}{\partial G}(B_i,G_i,H_i) = G_{i+1}.
\end{align*}
One proceeds analogously  for $H_{i+1}'$ using equations \eqref{aux_H}, \eqref{tau_H} and \eqref{inv_I_J}. Similarly, one also sees that $G_{i+1}' T_2 \overset{d}{=} G_{i+1}$ and $H_{i+1}' T_2 \overset{d}{=} H_{i+1}$ combining the previous equations with symmetry and orthogonality of $T_2$.

In summary, we have 
\begin{align*}
    L(D,x) & = f(B_k, G_k u + H_k v, H_k u + G_k v) \\
    & \overset{d}{=} f(B_k', G_k' u + H_k' v, H_k' u + G_k' v ) \\
    & \overset{d}{=} f(B_k', G_k' u + H_k'T_2 v, H_k' u + G_k'T_2 v ) \\
    & = L(\tau (D), \tau (x)),
\end{align*}
which concludes the proof. 
\end{proof}

\begin{remark}[On the assumptions of Theorem~\ref{th:gnn_rating}]
    At first glance, the assumptions of Theorem~\ref{th:gnn_rating} might seem quite restrictive, especially the assumption about the invariance of the distributions of $G_0$ and $H_0$ with respect to right-multiplication by the symmetric orthogonal matrix $T_2$. Yet, this hypothesis holds, e.g., when the entries of $G_0$ and $H_0$ are independently and identically distributed according to a centered normal distribution thanks to the rotational invariance of isotropic random Gaussian vectors (see, e.g., \cite[Proposition~3.3.2]{vershynin2018high}). This is the case in common initialization strategies such as Xavier initialization \cite{glorot2010understanding}. In addition, numerical results presented in \S\ref{sec:experiments} suggest that rating impossibility might hold in more general settings, such as when the model $f$ includes ReLU activations (hence, when $F$ has points of nondifferentiability) or for models trained via Adam as opposed to SGD.
\end{remark}

\subsubsection{Application to identity effects}\label{subsubsec:id_eff_application}

As a practical application of Theorem~\ref{th:gnn_rating} to identity effects, we consider the problem of classifying identical two-letter words of the English alphabet $\mathcal{A} := \{\mathsf{A},\mathsf{B},\ldots, \mathsf{Z}\}$, already mentioned in \S\ref{sec:intro} and following \cite{brugiapaglia2022invariance}. Consider a training set $D$ formed by two-letter words that do not contain  $\mathsf{Y}$ nor $\mathsf{Z}$. Words are assigned the label 1 if they are composed by identical letters and 0 otherwise. Our goal is to verify whether a learning algorithm is capable of generalizing this pattern correctly to words containing the letters $\mathsf{Y}$ or $\mathsf{Z}$. 
The transformation $\tau$ of Theorem \ref{th:gnn_rating} is defined by
\begin{equation}
\label{eq:def_tau}
    \tau(\mathsf{xY}) = \mathsf{xZ}, \; \tau(\mathsf{xZ}) = \mathsf{xY}, \; \text{and}\; \tau(\mathsf{xy}) = \mathsf{xy},
\end{equation}
for all letters $\mathsf{x}, \mathsf{y} \in \mathcal{A}$, with $\mathsf{y} \neq \mathsf{Y},\mathsf{Z}$. Note that this transformation is of the form $\tau = I \otimes \tau_2$, where $I$ is the identity map. Hence, it fits the setting of Theorem~\ref{th:gnn_rating}. Moreover, since $D$ does not contain $\mathsf{Y}$ nor $\mathsf{Z}$ letters, $\tau(D) = D$. Hence, condition (i) of Theorem~\ref{th:rating_old} is satisfied. 

In order to represent letters as vectors of $\mathbb{R}^d$, we need to use a suitable \emph{encoding}.  Its choice is crucial to determine the properties of the transformation matrix $T_2$ associated with $\tau_2$, needed to apply Theorem~\ref{th:gnn_rating}. Formally, an encoding of an alphabet $\mathcal{A}$ is a set of vectors $\mathcal{E} \subseteq \mathbb{R}^d$, of the same cardinality of $\mathcal{A}$, to which letters can be associated with. In our case, $|\mathcal{A} |  = 26 = |\mathcal{E}|$. We say that an encoding is \emph{orthogonal} if it is an orthonormal set of $\mathbb{R}^d$. For example, the popular one-hot encoding  $\mathcal{E} = \{e_i\}_{i=1}^{26} \subseteq \mathbb{R}^{26}$, i.e., the canonical basis of $\mathbb{R}^{26}$, is an orthogonal encoding.

In this setting, every word is modeled as a graph defined by two nodes connected by a single unweighted and undirected edge. Each node $v$ is labeled with a node feature $\alpha(v) \in  \mathbb{R}^{d}$, corresponding to a letter's encoding. An example is depicted in Figure~\ref{fig:two-letters-graph}.
 \begin{figure}[t]
     \centering
     \includegraphics[width = 0.45\textwidth]{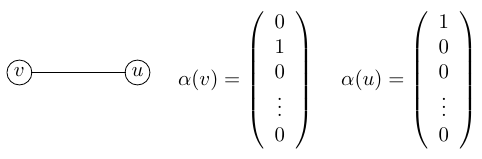}
     \caption{Graph modeling of a two-letter word: a vertex feature $\alpha(v) \in \mathbb{R}^{d}$ is attached to each node $v$ of a two-node undirected graph, according to a given encoding $\mathcal{E}$ of the English alphabet $\mathcal{A}$. In this figure, $\mathcal{E}$ is the one-hot encoding.}
     \label{fig:two-letters-graph}
 \end{figure}
 
\begin{theorem}[Inability of GNNs to classify identical two-letter words outside the training set]\label{th:rating_alphabet}
    Let $\mathcal{E} \subseteq \mathbb{R}^{26}$ be an orthogonal encoding of the English alphabet $\mathcal{A}$ and let $L$ be a learner obtained by training a GNN of the form \eqref{morris_gconv}  via SGD to classify identical two-letter words. Assume that words in the training set $D$ do not contain the letter $\mathsf{Y}$ nor $\mathsf{Z}$. Then, $L$ assigns the same rating (in distribution) to any word of the form $\mathsf{xy}$ where $\mathsf{y} \in \{\mathsf{Y},\mathsf{Z}\}$, i.e., $L(D, \mathsf{xY}) \overset{d}{=} L(D, \mathsf{xZ})$ for any $\mathsf{x} \in \mathcal{A}$. Hence, it is unable to generalize to identity effect outside the training set.
    \end{theorem}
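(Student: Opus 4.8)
The plan is to obtain the statement as a direct application of the rating impossibility theorem for invariant learners (Theorem~\ref{th:rating_old}), using Theorem~\ref{th:gnn_rating} to certify its hypothesis~(ii). First I would record that, as already noted after \eqref{learning_algorithm_gnn}, a GNN of the form \eqref{morris_gconv} acting on a two-node graph is exactly a learning algorithm of the form $L(D,x)=f(B,Gu+Hv,Hu+Gv)$ with $x=(u,v)$: since each of the two nodes has the other as its only neighbour, the pooled neighbourhood feature coincides with the other node's hidden state, so with $G=W^{(1)}_{\mathrm{upd}}$ and $H=W^{(1)}_{\mathrm{agg}}$ the arguments $Gu+Hv$ and $Hu+Gv$ are precisely the two first-layer pre-activations, while $B$ absorbs the bias $b^{(1)}$, all weights and biases of layers $2,\dots,T$, and the READOUT parameters, and $f$ encodes the first-layer activation $\sigma$, the deeper layers, and the readout.

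Next I would take $\tau=I\otimes\tau_2$, where $\tau_2$ is the linear map that swaps the encoding vectors $w_{\mathsf{Y}},w_{\mathsf{Z}}\in\mathcal{E}$ of $\mathsf{Y}$ and $\mathsf{Z}$ and fixes the encoding of every other letter; at the level of graphs this is exactly the transformation \eqref{eq:def_tau}. I then verify the hypotheses of Theorem~\ref{th:gnn_rating}. (a) As an orthogonal encoding of a $26$-letter alphabet in $\mathbb{R}^{26}$, $\mathcal{E}$ is an orthonormal basis, and $\tau_2$ is a transposition of two of its elements; hence the matrix $T_2$ maps $\mathcal{E}$ onto itself and is therefore orthogonal, and since $\tau_2^2=\mathrm{id}$ it satisfies $T_2=T_2^{-1}=T_2^\top$, so it is also symmetric. (b) Because no word of $D$ contains $\mathsf{Y}$ or $\mathsf{Z}$, the feature of every node of every training graph is a vector of $\mathcal{E}\setminus\{w_{\mathsf{Y}},w_{\mathsf{Z}}\}$, which $\tau_2$ fixes; hence \eqref{input_inv} holds. (c) With the standard i.i.d.\ centered Gaussian / Xavier initialization, the distributions of $W^{(1)}_{\mathrm{upd}}$ and $W^{(1)}_{\mathrm{agg}}$ are invariant under right-multiplication by the orthogonal matrix $T_2$, and $B_0,G_0,H_0$ are independent, so the initialization hypothesis is met. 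Theorem~\ref{th:gnn_rating} then gives $L(D,x)\overset{d}{=}L(\tau(D),\tau(x))$ for every $x\in\mathcal{I}$, which is hypothesis~(ii) of Theorem~\ref{th:rating_old}.

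Hypothesis~(i) of Theorem~\ref{th:rating_old} is immediate: since $\tau_2$ fixes the encoding of every letter occurring in $D$, we have $\tau(D)=D$, so in particular $\tau(D)\overset{d}{=}D$. Theorem~\ref{th:rating_old} then yields $L(D,\tau(x))\overset{d}{=}L(D,x)$ for every $x\in\mathcal{I}$. Taking as input $x$ the two-node graph representing $\mathsf{xY}$ for an arbitrary $\mathsf{x}\in\mathcal{A}$, whose image under $\tau$ represents $\mathsf{xZ}$, gives $L(D,\mathsf{xZ})\overset{d}{=}L(D,\mathsf{xY})$, which is the asserted equality. Specializing to $\mathsf{x}=\mathsf{Y}$ yields $L(D,\mathsf{YY})\overset{d}{=}L(D,\mathsf{YZ})$ although the identity-effect labels of $\mathsf{YY}$ and $\mathsf{YZ}$ are $1$ and $0$; since both words lie outside $D$ and are assigned the same rating in distribution, the learner cannot classify both correctly, proving it fails to generalize the identity effect beyond the training set.

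I do not expect any genuinely hard step: the proof is essentially the verification that the two-letter-word setup instantiates the abstract framework. The points deserving a line of justification are the identification of the two-node GNN with the class \eqref{learning_algorithm_gnn} (absorbing the readout and the deeper layers into $f$ and $B$), the symmetry-and-orthogonality of the transposition matrix $T_2$ in an orthonormal basis, and the invariance of the initialization under right-multiplication by an orthogonal matrix; the last is the assumption most worth flagging, since (as the remark after Theorem~\ref{th:gnn_rating} notes) the conclusion really does rely on it, and it is what ties the result specifically to orthogonal encodings.
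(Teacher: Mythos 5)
Your proposal is correct and follows essentially the same route as the paper: identify the two-node GNN with the class \eqref{learning_algorithm_gnn}, take $\tau=I\otimes\tau_2$ swapping the encodings of $\mathsf{Y}$ and $\mathsf{Z}$, check that $T_2$ is orthogonal and symmetric and that $\tau(D)=D$, invoke Theorem~\ref{th:gnn_rating} to get condition~(ii) of Theorem~\ref{th:rating_old}, and conclude. The only (harmless) differences are cosmetic: you deduce symmetry of $T_2$ from involutivity plus orthogonality instead of the paper's explicit factorization $T_2=B^{-1}PB$, and you state explicitly the independence/rotation-invariance assumption on the initialization of $G_0,H_0$, which the paper leaves implicit in the theorem statement and only addresses in the remark following Theorem~\ref{th:gnn_rating}.
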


\begin{proof}
As discussed above, the transformation $\tau$ defined by \eqref{eq:def_tau} is of the form $\tau = I \otimes \tau_2$. Moreover, the matrix associated with the linear transformation $\tau_2$ is of the form $T_2 = B^{-1} P B$, where $B$ is the change-of-basis matrix from the orthonormal basis associated with the encoding $\mathcal{E}$ to the canonical basis of $\mathbb{R}^{26}$ (in particular, $B$ is orthogonal and $B^{-1} = B^T$) and $P$ is a permutation matrix that switches the last two entries of a vector, i.e., using block-matrix notation,
\begin{equation*}
    P = \begin{bmatrix}
  I
  & \rvline & 0 \\
\hline
  0 & \rvline &
  \begin{matrix}
  0 & 1 \\
  1 & 0
  \end{matrix}
\end{bmatrix}, \quad I \in \mathbb{R}^{24\times 24}.
\end{equation*}
Hence, $T_2$ is orthogonal and symmetric, and therefore fits the framework of the Theorem \ref{th:gnn_rating}. %Moreover, 
%\begin{equation*}
%    \tau_2(\mathsf{x}) = \mathsf{x}, \;\tau_2(\mathsf{Y}) = \mathsf{Z}, \; \tau_2(\mathsf{Z}) = \mathsf{Y}
%\end{equation*}
%for all letters $\mathsf{x} \neq \mathsf{Y},\mathsf{Z}$, which implies:
%\begin{equation*}
%    \tau(\mathsf{xY}) = \mathsf{xZ}, \; \tau(\mathsf{xZ}) = \mathsf{xY}, \; \tau(\mathsf{xy}) = \mathsf{xy}
%\end{equation*}
%for all letters $\mathsf{x}$ and $\mathsf{y}$, with $\mathsf{y} \neq \mathsf{Y},\mathsf{Z}$.

On the other hand, as discussed in \S\ref{subsec:id_eff_words}, every GNN of the form \eqref{morris_gconv} is a model of the form \eqref{learning_algorithm_gnn}. Thus, Theorem~\ref{th:gnn_rating} yields $L(D,\mathsf{xy}) \overset{d}{=}  L(\tau (D), \tau (\mathsf{xy}))$, 
for all letters $\mathsf{x},\mathsf{y}\in \mathcal{A}$. In particular, 
$L(D,\mathsf{xY}) \overset{d}{=}  L(\tau (D), \mathsf{xZ})$, which corresponds to condition (ii) of Theorem~\ref{th:rating_old}. Recalling that $\tau(D) = D$, also condition (i) holds. Hence,  we can apply Theorem~\ref{th:rating_old}
and conclude the proof. 
\end{proof}

% Experimental results are presented in Subsection \ref{subsec:exp_twoletter}. 

\subsection{What GNNs can learn: identity effects on dicyclic graphs }\label{subsec:id_eff_cycles}

We now analyze the expressivity of GNNs to learn identity effects related to the \textit{topology} of the graphs in the dataset. This novel setting requires to design \textit{ex novo} the formulation of our problem. In fact, we are not focusing on the feature matrix $X_G$ of a graph anymore, but on its adjacency matrix $A$, which contains all the topological information.  Here we focus on a particular class of graphs, which we call \emph{dicyclic graphs}. A dicyclic graph is a graph composed by an $m$-cycle and an $n$-cycle, linked by a single edge. Since a dicyclic graph is uniquely determined by the length of the two cycles, we can identify it with the equivalence class $[m,n]$ over the set of pairs $(a,b)$,  $a,b \in \mathbb{N}$, defined as $[m,n] := \{(m,n), (n,m)\}$. A dicyclic graph $[m,n]$ is \emph{symmetric} if $m=n$ and \emph{asymmetric} otherwise.

In this section we provide an analysis of the expressive power of GNNs when learning identity effects on dicyclic graphs (i.e., classifying whether a dicyclic graph is symmetric or not). 
We start by proving a lemma that shows how information propagates through the nodes of a cycle, during the 1--WL test iterations, when one of the nodes has a different initial color with respect to all the other nodes.

\begin{lemma}[1-WL test on $m$-cycles]\label{lem:k_cycles}
Consider an $m$-cycle in which the vertices are numbered from $0$ to $m-1$ clockwise, an initial coloring $c^{(0)} = [0,1, \dots, 1]^T \in \mathbb{N}^m$ (vector indexing begins from 0, and the vector is meant to be circular, i.e., $c^{(0)}(m)=c^{(0)}(0)$), and define the function $\textnormal{HASH}$ as
\begin{equation*}
\begin{cases}
    \textnormal{HASH}(0,\lms j,k \rms) = 0 & \\
    \textnormal{HASH}(i,\lms j,k\rms) = i & \; \text{if} \; j \neq k, \; i< \lfloor \frac{m}{2} \rfloor \\
    \textnormal{HASH}(i, \lms j,k\rms) = i+1  & \; \text{if} \; j = k, \; i< \lfloor \frac{m}{2} \rfloor \\
    \textnormal{HASH}(\lfloor \frac{m}{2} \rfloor, \lms j,k \rms) = \lfloor \frac{m}{2} \rfloor & 
    %\; \text{if} \; j,k \leq \lfloor \frac{m}{2} \rfloor
\end{cases},
\end{equation*}
with $j,k \leq \lfloor \frac{m}{2} \rfloor$.
Then, \textnormal{HASH} is an injective coloring over the $m$-cycle at each iteration $t$ of the 1--WL test. This gives, at each iteration $0\leq t<T = \lfloor \frac{m}{2}\rfloor$, the coloring
\begin{equation}
\label{lem:k_cycles:eq:color_formula}
    \begin{cases}
     c^{(t)}(i) = i    & \text{if} \;\; 0 \leq i \leq t    \\
      c^{(t)}(i) = t+1   & \text{if}\;\; t< i < m-t\\
      c^{(t)}(i) = m-i & \text{if} \;\; m-t \leq i < m \\
    \end{cases},
\end{equation}
and the 1-WL test terminates after $T = \lfloor \frac{m}{2} \rfloor$ iterations (i.e.,  $c^{(T)}= c^{(T-1)}$), giving  $\lfloor \frac{m}{2}\rfloor + 1 $ colors. 
% \noindent
% The same holds for $(m+1)-$cycles with $m \equiv 0 (\text{mod}\; 2)$ , where $\lceil \frac{m+1}{2}\rceil$ is replaced by $\frac{m}{2}$ and the last HASH property is replaced by $\text{HASH}( \frac{m}{2} , \{\frac{m}{2},k \}) = \frac{m}{2} $.
\end{lemma}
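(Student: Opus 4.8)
\emph{Approach.} The plan is to prove the three assertions together by induction on the iteration index $t$, pushing the explicit ``tent-shaped'' profile \eqref{lem:k_cycles:eq:color_formula} forward one step at a time. The picture to keep in mind is that vertex $0$ is the unique source of color information and that, at each step of the $1$--WL refinement, the frontier separating the already ``resolved'' vertices from the uniform bulk advances by exactly one vertex in each of the two directions around the cycle; after $\lfloor m/2 \rfloor$ steps the two frontiers meet at the antipodal vertex (or pair of adjacent vertices, if $m$ is odd) and no further refinement is possible. The base case $t=0$ is immediate, since \eqref{lem:k_cycles:eq:color_formula} then reduces to $c^{(0)}(0)=0$ and $c^{(0)}(i)=1$ for $0<i<m$, i.e.\ the prescribed initialization.

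\emph{Inductive step.} Assume \eqref{lem:k_cycles:eq:color_formula} at some $t$ with $0 \le t \le T-2$. I would read the colors of the two neighbours $i-1$ and $i+1$ of each vertex $i$ (indices modulo $m$) off the formula and split the vertices into the source $i=0$, the two symmetric ``ramps'' $1\le i\le t$ and $m-t\le i\le m-1$, the two ``corners'' $i=t+1$ and $i=m-t-1$, and the ``bulk'' $t+1<i<m-t-1$ (possibly empty). A short check then shows that the only $(\text{color},\text{neighbour-multiset})$ pairs occurring on the cycle at iteration $t$ are $(0,\lms 1,1\rms)$, $(i,\lms i-1,i+1\rms)$ for $1\le i\le t$, $(t+1,\lms t,t+1\rms)$, and $(t+1,\lms t+1,t+1\rms)$. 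Since every color involved is $\le t+1 \le T=\lfloor m/2\rfloor$ and every first argument fed to $\textnormal{HASH}$ is $\le t+1 \le T-1 < \lfloor m/2\rfloor$ (so the special value at color $\lfloor m/2\rfloor$ does not yet occur), $\textnormal{HASH}$ is applicable and maps these four pairs to the pairwise distinct values $0$, $i$, $t+1$ and $t+2$; this is the injectivity claim at iteration $t$, and a vertex-by-vertex substitution reproduces \eqref{lem:k_cycles:eq:color_formula} with $t$ replaced by $t+1$.

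\emph{Stabilization and color count.} The remaining step $t=T-1\mapsto T$ is where the parity of $m$ enters. At iteration $T-1$ the bulk has shrunk to the single antipodal vertex $m/2$ when $m$ is even, and to the two adjacent vertices $(m-1)/2$ and $(m+1)/2$ when $m$ is odd; in either case these vertices carry color $T=\lfloor m/2\rfloor$, so the fourth rule in the definition of $\textnormal{HASH}$ forces them to retain it, while the computation of the inductive step already shows that the source and all ramp vertices are fixed points of the update. Hence $c^{(T)}=c^{(T-1)}$ and the test halts after exactly $T=\lfloor m/2\rfloor$ iterations. Finally, \eqref{lem:k_cycles:eq:color_formula} gives that the set of colors present at iteration $t$ equals $\{0,1,\dots,t+1\}$, so the number of colors is $t+2$ for $0\le t\le T-1$ --- strictly increasing over these iterations, hence not stabilizing before iteration $T$ --- and it settles at $T+1=\lfloor m/2\rfloor+1$.

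\emph{Main obstacle.} The generic inductive step is essentially bookkeeping. The delicate part is the endgame: one must check carefully that the bulk really has width $1$ (even $m$) or $2$ (odd $m$) precisely at iteration $T-1$, that the two corners coincide (even $m$) or become adjacent (odd $m$) exactly then, and that the special $\textnormal{HASH}$ value at color $\lfloor m/2\rfloor$ is invoked only at that last step. This is what pins the stopping time to $T=\lfloor m/2\rfloor$ and the final palette to $\lfloor m/2\rfloor+1$ colors, and it is cleanest to run it separately for the two parities of $m$ and to verify the smallest cases ($m=3,4$) directly.
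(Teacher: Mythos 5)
Your proposal is correct and follows essentially the same route as the paper's proof: induction on $t$ tracking the explicit tent-shaped coloring \eqref{lem:k_cycles:eq:color_formula}, a case split over source/ramps/corners/bulk to verify injectivity and advance the frontier, and a final check that the coloring is stable at $t=\lfloor m/2\rfloor$. Your treatment of the endgame (parity of $m$, the special HASH rule at color $\lfloor m/2\rfloor$, and the color count $t+2$) is in fact slightly more explicit than the paper's, but it is the same argument.
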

\begin{proof}
We prove the lemma by induction on $t$.
\paragraph{Case $t=1$} We start with $c^{(0)}(0)=0$ and $c^{(0)}(i) = 1$, for $i = 1, \dots, m-1$. We only have three hashing cases:
    \begin{itemize}
        \item[$\circ$] $\text{HASH}(0, \lms 1,1\rms) = 0$, the color assigned to node 0;
        \item[$\circ$] $\text{HASH}(1,\lms 0,1\rms) = 1$, the color assigned to nodes $1$ and $m-1$;
        \item[$\circ$] $\text{HASH}(1,\lms 1,1\rms) = 2$, the color assigned to all nodes $1<i<m-1$.
    \end{itemize}
    This shows that $c^{(1)}$ satisfies \eqref{lem:k_cycles:eq:color_formula} and that $\text{HASH}$ is injective at iteration $t=1$. Hence, the claim is true for $t=1$.
\paragraph{Inductive step $t \rightarrow t+1$} Assume that the inductive hypothesis is true for step $t$. Hence, our coloring is of the form  \eqref{lem:k_cycles:eq:color_formula} and that $\text{HASH}$ is injective at iteration $t$.
%\begin{equation*}
%    \begin{cases}
%     c^{(t)}(i) = i    & \text{if} \;\; i \leq t    \\
%     c^{(t)}(i) = m-i & \text{if} \;\; i \geq m-t \\
%      c^{(t)}(i) = t+1   & \text{if}\;\; t< i < m-t
%    \end{cases}.
%\end{equation*}
This means that for $0 < i \leq t$   we have $c^{(t)}(i-1) < c^{(t)}(i) < c^{(t)}(i+1)$ and for  $m-t \leq i < m-1$ we have $c^{(t)}(i+1) < c^{(t)}(i) < c^{(t)}(i-1)$; thus, for $0 < i \leq t $ or $ m-t-1 \leq i < m-1$, we see that
\begin{equation*}
     c^{(t+1)}(i) = \text{HASH}(c^{(t)}(i), \lms c^{(t)}(i-1), c^{(t)}(i+1) \rms = i.
\end{equation*}
For $i=t+1$  we have $c^{(t)}(i-1) < c^{(t)}(i) = c^{(t)}(i+1)$ and for $i=m-t-2$ we have $c^{(t)}(i+1) < c^{(t)}(i) = c^{(t)}(i-1)$; therefore, for $i=t+1$ and $i=m-t-2$, we also have
\begin{equation*}
   c^{(t+1)}(i) = \text{HASH}(c^{(t)}(i), \lms c^{(t)}(i-1), c^{(t)}(i+1) \rms = i.
\end{equation*}
For all the remaining indices $t+1< i < m-t-2$, we have $c^{(t)}(i-1) = c^{(t)}(i) = c^{(t)}(i+1)$, so 
\begin{align*}
   c^{(t+1)}(i) & = \text{HASH}(c^{(t)}(i), \lms c^{(t)}(i-1), c^{(t)}(i+1) \rms\\
   & =  (t+1) +1 = t+2.
\end{align*}
The HASH function is still injective, as for $0 < i \leq t+1$ we have $c^{(t)}(i-1) < c^{(t)}(i) < c^{(t)}(i+1)$, for $m-t-1 \leq i < m-1$ we have $c^{(t)}(i+1) < c^{(t)}(i) < c^{(t)}(i-1)$, and for $t+1< i < m-t-1$  it holds $\text{HASH}(c^{(t)}(i),\lms c^{(t)}(i-1),c^{(t)}(i+1) \rms) = \text{HASH}(t+1,\lms t+1,t+1 \rms) =t+2$. Therefore, we have
\begin{equation*}
\begin{cases}
     c^{(t+1)}(i) = i    & \text{if} \;\; 0 \leq i \leq t+1   \\
      c^{(t+1)}(i) = t+2   & \text{if}\;\; t+1< i < m-t-1  \\
    c^{(t+1)}(i) = m-i & \text{if} \;\;  m- t -1 \leq i < m
\end{cases}.
\end{equation*}

\paragraph{Termination of the 1--WL test}
At iteration $\lfloor \frac{m}{2} \rfloor -1 $ we have
\begin{equation*}
    \begin{cases}
     c^{(\lfloor \frac{m}{2} \rfloor -1 )}(i) = i    & \text{if} \;\; 0 \leq i \leq \lfloor \frac{m}{2} \rfloor -1  \\    
      c^{(\lfloor \frac{m}{2} \rfloor -1)}(i) =\lfloor \frac{m}{2} \rfloor    & \text{if}\;\; i = \lfloor \frac{m}{2} \rfloor \; \text{or} \; i = \lceil \frac{m}{2} \rceil \\
      c^{(\lfloor \frac{m}{2} \rfloor -1 )}(i) = m-i   & \text{if} \;\; \lceil \frac{m}{2} \rceil +1 \leq i < m
      \end{cases}.
\end{equation*}
This concludes the proof.
\end{proof}

A graphical representation of Lemma \ref{lem:k_cycles} can be found in Figure \ref{fig:cycle_sequence}.
\begin{figure*}[t]
    \centering
    \includegraphics[width = 0.8\textwidth]{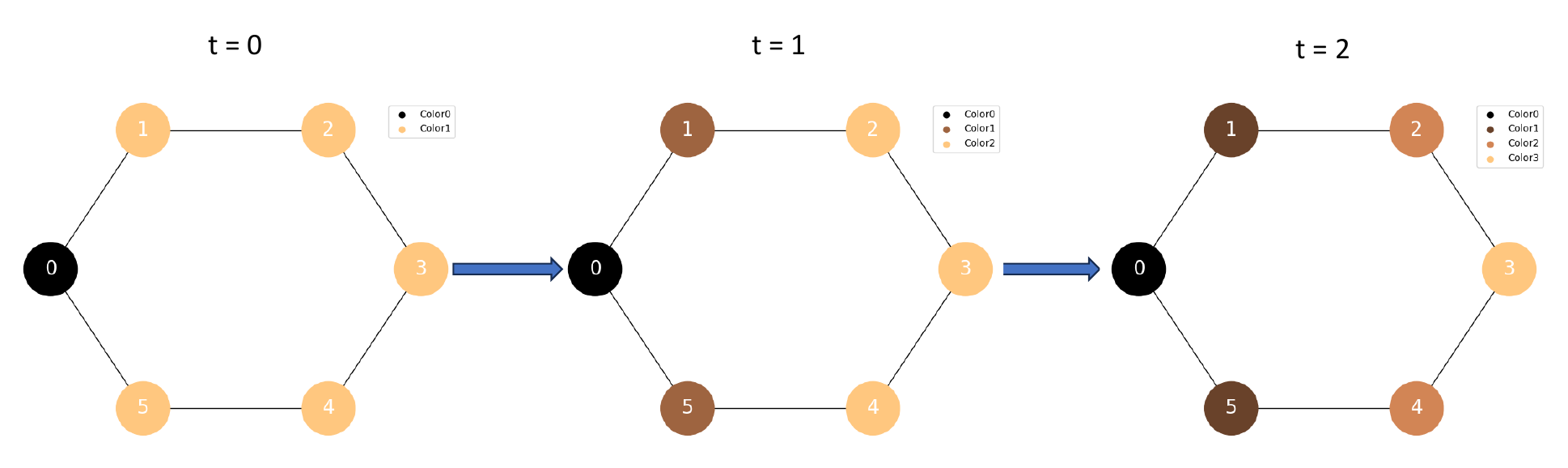}
    \caption{Graphical illustration of  Lemma~\ref{lem:k_cycles}: a 6-cycle reaches a stable coloring in $\lfloor \frac{6}{2} \rfloor = 3$ steps with $\lfloor \frac{6}{2} \rfloor +1 = 4$ colors. Numbers are used to identify nodes.}
    \label{fig:cycle_sequence}
\end{figure*}
We observe that the specific node indexing of Lemma~\ref{lem:k_cycles} was adopted just to ease  computations; nevertheless, it is possible to construct a HASH function for other choices of node indexing. This is due to the fact that the mapping depends only on the topological structure in each node's neighborhood.   
This lemma represents the core of next theorem's proof, which establishes the ability of the 1-WL test to classify dicyclic graphs with identical cycles. Intuitively, if we have a dicyclic graph where node colors are uniformly initialized, one step of 1--WL test yields a coloring depending entirely on the number of neighbours for each node. In a dicyclic graph $[m,n]$ we always have $m+n-2$ nodes of degree two and $2$ nodes of degree three, so $c^{(1)}(i)=1$ for all 2-degree nodes $i$, and $c^{(1)}(j)=0$ for the two 3-degree nodes $j$. Hence, each cycle of the dicyclic graph satisfies the initial coloring hypothesis of Lemma~\ref{lem:k_cycles}. 
\begin{theorem}[1-WL test on dicyclic graphs]\label{th:3_degrees_nodes}
    The 1--WL test gives the same color to the 3-degree nodes of a uniformly colored dicyclic graph $[m,n]$ (i.e., $c^{(0)}= 0 \in \mathbb{N}^{m+n}$) if and only if $m=n$. Therefore, the 1--WL test can classify symmetric dicyclic graphs.
\end{theorem}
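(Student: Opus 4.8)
The plan is to reduce the problem to the behaviour of the $1$--WL test on each of the two cycles separately, apply Lemma~\ref{lem:k_cycles} to both, and then compare. Since the dicyclic graph $[m,n]$ is uniformly coloured, one iteration of the test assigns to every vertex a colour depending only on its degree: the two $3$-degree vertices $p$ (on the $m$-cycle) and $q$ (on the $n$-cycle) get a common colour, and the $m+n-2$ vertices of degree $2$ get another, distinct colour. Because $1$--WL only ever refines colour classes, the class of $\{p,q\}$ stays forever disjoint from the classes of the degree-$2$ vertices; hence, restricted to the $m$-cycle (resp. $n$-cycle), the colouring from the second iteration onward has exactly one distinguished vertex, namely $p$ (resp. $q$), with all other vertices sharing a colour --- precisely the initial configuration of Lemma~\ref{lem:k_cycles}. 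That lemma then governs the within-cycle colouring at every later iteration: it is the ``distance-from-the-anchor'' colouring, it separates all distance classes $0,1,\dots,\lfloor m/2\rfloor$ (resp. $0,1,\dots,\lfloor n/2\rfloor$), and it stabilizes after $\lfloor m/2\rfloor$ (resp. $\lfloor n/2\rfloor$) iterations. Write $\mathrm{col}_m(d)$ for the stable colour shared by the vertices at distance $d$ from $p$ inside the $m$-cycle (so $\mathrm{col}_m(0)=c(p)$ and $\mathrm{col}_m(d)\neq\mathrm{col}_m(d')$ for $d\neq d'$ in $\{0,\dots,\lfloor m/2\rfloor\}$), and similarly $\mathrm{col}_n(d)$.

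The ``if'' direction is immediate: when $m=n$ the map $u_i\leftrightarrow w_i$ between the two cycles is a graph automorphism exchanging $p$ and $q$, and the $1$--WL colouring is invariant under automorphisms (a one-line induction on the iteration count), so $p$ and $q$ carry the same colour throughout the test.

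For the ``only if'' direction I argue by contraposition: assume $m\neq n$, say $m<n$, and suppose for contradiction that $c(p)=c(q)$ in the stable colouring $c$. The key elementary fact is that, at stability, two vertices with the same colour must have the same multiset of neighbour colours (this follows from injectivity of $\mathrm{HASH}$ and the fact that one further $1$--WL iteration merely relabels classes). Applied to $p$ and $q$ --- whose neighbourhoods consist of their two distance-$1$ cycle vertices together with their bridge partner --- and cancelling the common bridge colour $c(p)=c(q)$ (legitimate because $\mathrm{col}_m(1)\neq c(p)\neq\mathrm{col}_n(1)$, by the separation property above), this yields $\mathrm{col}_m(1)=\mathrm{col}_n(1)$. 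Iterating the same reasoning outward along the cycles --- at a distance-$d$ vertex with $1\le d\le\lfloor m/2\rfloor-1$ the two neighbours sit at distances $d-1$ and $d+1$, and the already-known equality of the distance-$(d-1)$ colours lets one cancel and deduce equality of the distance-$(d+1)$ colours --- gives $\mathrm{col}_m(d)=\mathrm{col}_n(d)$ for all $d=0,1,\dots,\lfloor m/2\rfloor$. The contradiction now appears at the antipodal distance $\ell:=\lfloor m/2\rfloor$ of the \emph{shorter} cycle, where the $m$-cycle ``closes up'': the neighbour-colour multiset of a distance-$\ell$ vertex of the $m$-cycle is $\lms\mathrm{col}_m(\ell-1),\mathrm{col}_m(\ell-1)\rms$ if $m$ is even, and $\lms\mathrm{col}_m(\ell-1),\mathrm{col}_m(\ell)\rms$ if $m$ is odd; whereas, since $n>m$, a distance-$\ell$ vertex of the $n$-cycle still has a neighbour at distance $\ell+1$ (or, in the borderline case $n=2\ell+1$, one of its distance-$\ell$ neighbours is another distance-$\ell$ vertex), so its neighbour-colour multiset contains $\mathrm{col}_n(\ell+1)$ (resp. $\mathrm{col}_n(\ell)$) --- a colour which, by the identities just derived together with the separation property in the $n$-cycle, cannot occur in the $m$-cycle's multiset. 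This violates the equal-neighbour-multiset property, so $c(p)\neq c(q)$. Combining the two directions, the $1$--WL test gives $p$ and $q$ equal colours exactly when $m=n$; consequently the stable colour histogram of a symmetric dicyclic graph (in which the distance-$0$ colour has multiplicity two) differs from that of every asymmetric one, so a readout separating these histograms --- hence, by Theorem~\ref{th:wl_morris}, a GNN of the form \eqref{morris_gconv} with a suitable readout --- classifies symmetric dicyclic graphs.

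The reduction to Lemma~\ref{lem:k_cycles} and the ``if'' part are routine; the real work, and the main obstacle, is in the ``only if'' part: first, organizing the bookkeeping so that the discrepancy between the two cycles --- which is only directly visible at the antipode of the shorter cycle --- is forced to propagate all the way back to the $3$-degree vertices through the chain of equal-neighbour-multiset deductions; and second, handling the parity borderline case $\lfloor m/2\rfloor=\lfloor n/2\rfloor$ (i.e. $\{m,n\}=\{2\ell,2\ell+1\}$), where a crude colour count does not suffice and one must compare the neighbour-colour multisets at the antipodal distance explicitly.
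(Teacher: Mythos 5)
Your proof is correct, but it takes a genuinely different route from the paper's for the crucial asymmetric (``only if'') direction. The paper argues forward in time: it extends the explicit $\textnormal{HASH}$ function of Lemma~\ref{lem:k_cycles} to colours beyond $\lfloor m/2\rfloor$ and tracks the iterations, showing that the extra distance classes present in the longer cycle generate a new colour that propagates back along that cycle until its $3$-degree node receives a colour ($\infty$) different from its counterpart's; the symmetric case is handled by running the test in parallel on the two cycles with the same $\textnormal{HASH}$. You instead work directly with the stable colouring: the ``if'' part via automorphism invariance of 1--WL, and the ``only if'' part by contraposition, using the fact that the stable partition is equitable (same colour implies same neighbour-colour multiset) to propagate the assumed equality $c(p)=c(q)$ outward through the distance classes, $\mathrm{col}_m(d)=\mathrm{col}_n(d)$, until a contradiction appears at the antipode of the shorter cycle, with the parity borderline $\{m,n\}=\{2\ell,2\ell+1\}$ treated separately. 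Both proofs lean on Lemma~\ref{lem:k_cycles} for the within-cycle structure. Your route buys robustness and economy: no ad hoc $\textnormal{HASH}$ extensions or iteration bookkeeping, only the standard equitable-partition characterisation of the stable colouring, which you correctly justify from injectivity of $\textnormal{HASH}$ and the termination criterion. The paper's route buys an explicit handle on the number of iterations needed (tied to the half-length of the longer cycle), which it later uses when relating the task to the number of 1--WL iterations/GNN layers. Two small points to tighten: the step ``Lemma~\ref{lem:k_cycles} governs the within-cycle colouring'' deserves one explicit sentence noting that degree-$2$ vertices update using only within-cycle colours, that the anchor's class never merges with theirs, and that the two vertices at equal distance from the anchor remain colour-equivalent (e.g.\ via the reflection automorphism through the bridge), so that $\mathrm{col}_m(d)$ is well defined and all distance classes are separated at stability; and your closing remark about colour histograms and the $\textnormal{READOUT}$ really belongs to Corollary~\ref{cor:gnn_topological_identity} rather than to this theorem, though it is harmless.
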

\begin{proof}
After one iteration on the 1--WL test, regardless of the symmetry of the dicyclic graph, we obtain a coloring in which only 3-degree nodes have a different color, whose value we set to 0. We can therefore split the coloring vector $c^{(1)} \in \mathbb{N}^{m+n}$ in two subvectors, namely, $c^{(1)} = [(c_1^{(1)})^T, (c_2^{(1)})^T]^T$ corresponding to each cycle, respectively, and where $c_1^{(1)}(0)$ and $c_2^{(1)}(0)$ correspond to the 3-degree nodes. We treat the symmetric and the asymmetric cases separately.

\paragraph{The symmetric case} We let $c_1^{(0)}=c_2^{(0)} = c_0^{(0)}$, with $c_0^{(0)} = [0, 1, \dots, 1]$. In this case, we run the 1--WL test in parallel on both vectors $c_1^{(t)}$ and $c_2^{(t)}$, where the HASH function in Lemma \ref{lem:k_cycles} is extended on the 3-degree nodes as $\text{HASH}(0, \lms 0, j,k \rms) = 0$. Therefore, for each $t \geq 0$, 
\begin{equation*}
    c_0^{(t+1)}(0) = \text{HASH} (c_0^{(t)}(0), \lms c_0^{(t)}(0), c_0^{(t)}(1), c_0^{(t)}(m-1) \rms)  = 0.
\end{equation*} 
Thanks to Lemma \ref{lem:k_cycles} we obtain $c_1^{(\lfloor \frac{m}{2} \rfloor )} =c_2^{(\lfloor \frac{m}{2} \rfloor  )} $, which is a stable coloring for the whole graph, as the color partition is not refined anymore. 

\paragraph{The asymmetric case} Without loss of generality, we can assume $m = \text{length}(c_1^{(t)}) \neq \text{length}(c_2^{(t)}) = m + h$ for some $h>0$. We also assume for now that $m$ is odd (the case $m$ even will be briefly discussed later).
We extend the HASH function from Lemma~\ref{lem:k_cycles} to colors $j,k> \lfloor \frac{m}{2} \rfloor$. For $j > \lfloor \frac{m}{2} \rfloor $ or $ k > \lfloor \frac{m}{2} \rfloor$ we define 
\begin{equation*}
\begin{cases}
    \text{HASH}(0, \lms j,k \rms) = \infty & \\
    \text{HASH}(i,\lms j,k\rms) = \lfloor \frac{m}{2} \rfloor + i & \; \text{if} \; j \neq k, \; i \leq \lfloor \frac{m}{2} \rfloor \\
    \text{HASH}(i, \lms j,k \rms) = \lfloor \frac{m}{2} \rfloor + i+1  & \; \text{if} \; j = k, \; i \leq \lfloor \frac{m}{2} \rfloor
\end{cases} .
\end{equation*}
Running in parallel the 1--WL test on the two cycles, computing the coloring vectors $c_1^{( \lfloor \frac{m}{2} \rfloor+1)}$ and $c_2^{( \lfloor \frac{m}{2} \rfloor+1)}$ up to iteration $\lfloor \frac{m}{2} \rfloor+1$, 
for  $ i = \lfloor \frac{m}{2} \rfloor +1$  we have $c_2(i) = \lfloor \frac{m}{2} \rfloor + 1 $. Therefore, given the extension of the HASH function just provided, this new color starts to backpropagate on the indices $i< \lfloor \frac{m}{2} \rfloor +1$ , $i > m-h-\lfloor \frac{m}{2} \rfloor -1$ until it reaches the index $0$. As a consequence, it exists an iteration index $T$ such that $c_2^{(T)}(0) =  \text{HASH} (0, \lms j, k^*\rms)$ with $k^* > \lfloor \frac{m}{2} \rfloor $ and, finally, $c_2^{(T)}(0) = \infty$, giving $c_1^{(T)}(0) \neq c_2^{(T)}(0)$,
as claimed. 

The case in which $m$ is even works analogously, but we have to modify the HASH function in a different way to preserve injectivity. In particular, for $j,k \leq m/2$, we define
\begin{equation*}
    \begin{cases}
        \text{HASH}(i,\lms j,k \rms)= \frac{m}{2}  &  \; \text{if} \; j=k, i =  \frac{m}{2}  \\
        \text{HASH}(i,\lms j,k \rms)=  \frac{m}{2}  +1 &  \; \text{if} \; j\neq k, i =  \frac{m}{2}  
    \end{cases}.
\end{equation*}
This concludes the proof.
\end{proof}

%As a direct consequence of Theorem \ref{th:3_degrees_nodes}, we obtain the following corollary.
%\begin{corollary}\label{cor:WL_topological_identity}
 %   The 1--WL test can classify symmetric dicyclic graphs.%composed by two identical cycles from dicyclic graphs composed by cycles of different length.
%\end{corollary}

Theorem~\ref{th:3_degrees_nodes} establishes in a deterministic way the power of the 1--WL test in terms of distinguishing between symmetric and asymmetric dicyclic graphs, given a sufficient number of iterations directly linked with the maximum cycle length in the considered domain. Examples of 1-WL stable colorings on dicyclic graphs are presented in Figure~\ref{fig:wl_coloring}.

\begin{figure}[t]
\centering
\begin{subfigure}{0.5\textwidth}
\centering
    \includegraphics[width=0.7\linewidth]{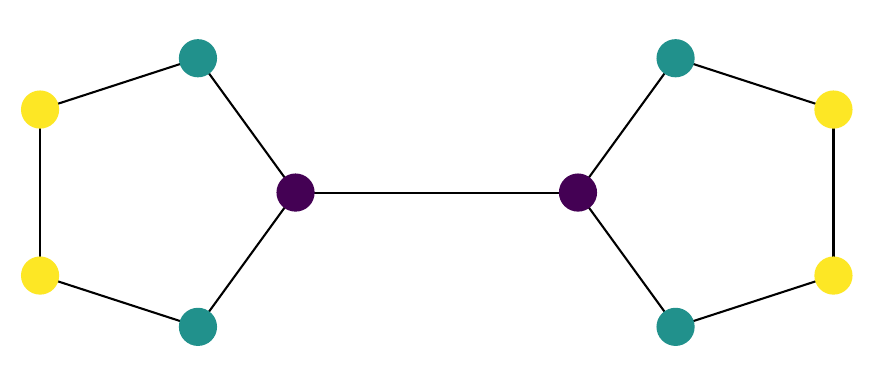}
    \caption{Stable 1-WL coloring for $[5,5]$.}
    \label{fig:5_5colored}
\end{subfigure}
\begin{subfigure}{0.5\textwidth}
\centering
    \includegraphics[width=0.7\linewidth]{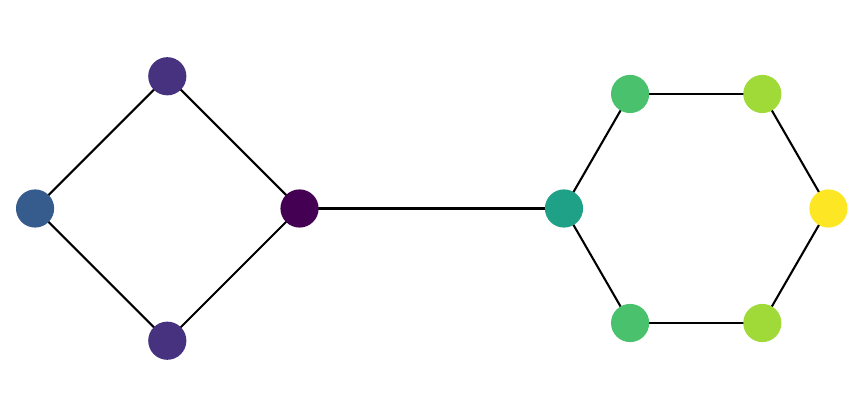}
    \caption{Stable 1-WL coloring for $[4,6]$.}
    \label{fig:4_6colored}
\end{subfigure}
\caption{Stable 1-WL coloring for different types of dicyclic graphs: as stated in Theorem \ref{th:3_degrees_nodes},  3-degree nodes have the same color in symmetric dicyclic graphs, and different color in the asymmetric ones. }
\label{fig:wl_coloring}
\end{figure}

Employing well-known results in the literature concerning the expressive power of GNNs (see \cite{morris2019weisfeiler,xu2018powerful} and in particular Theorem \ref{th:wl_morris}), we can prove the main result of this subsection on the classification power of GNNs on the domain of dicyclic graphs.

\begin{corollary}[GNNs can classify symmetric dicyclic graphs]\label{cor:gnn_topological_identity}
   % There exist weights $\Theta$, $\theta_{\textnormal{MLP}}$, and a GNN model $g_{\Theta, \theta_{\textnormal{MLP}}}$  such that $g_{\Theta, \theta_{\textnormal{MLP}}}$  is able to classify symmetric dicyclic graphs.
   There exist a GNN of the form \eqref{morris_gconv} and a \textnormal{READOUT} function able to classify symmetric dicyclic graphs.

\end{corollary}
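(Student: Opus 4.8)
The plan is to derive the result by combining Theorem~\ref{th:3_degrees_nodes} with the GNN/1--WL equivalence of Theorem~\ref{th:wl_morris}, and then to exhibit a \textnormal{READOUT} that recovers symmetry from the multiset of final node features. First I would restrict to a domain of dicyclic graphs whose cycles have length at most some fixed $M$ and choose a number of layers $T$, depending only on $M$, large enough that $T$ iterations of the 1--WL test already produce the distinguishing colouring of Theorem~\ref{th:3_degrees_nodes}: on a uniformly coloured dicyclic graph $[m,n]$ the two degree-$3$ nodes then lie in the same 1--WL colour class if and only if $m=n$ (and this is a property of the colour \emph{partition}, hence independent of the particular injective \textnormal{HASH} used). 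Since this 1--WL run starts from a constant, hence real-valued, initial colouring, Theorem~\ref{th:wl_morris} yields a GNN of the form \eqref{morris_gconv} with $T$ layers and initial features $h_v^{(0)}$ all equal to that constant, whose layer-$T$ hidden features satisfy $h_u^{(T)} = h_w^{(T)}$ exactly when $u$ and $w$ share the same 1--WL colour after $T$ iterations.

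It then remains to read symmetry off the multiset $\lms h_v^{(T)} : v \in V \rms$. The key claim I would prove is: $[m,n]$ is symmetric if and only if every 1--WL colour class has even cardinality, equivalently every value occurring in that multiset has even multiplicity. For the forward direction, when $m=n$ the node-by-node exchange of the two cycles is a fixed-point-free involutive automorphism of $[m,n]$; since the 1--WL colouring is automorphism-invariant, every colour class is preserved by this involution and therefore splits into pairs, hence has even size. For the converse, when $m\neq n$ I would use that the colouring $c^{(T)}$ refines the one-iteration colouring $c^{(1)}$, which on a uniformly coloured graph is determined solely by node degree; thus the colour class of any degree-$3$ node contains only degree-$3$ nodes, and since the two degree-$3$ nodes receive different colours by Theorem~\ref{th:3_degrees_nodes}, each of them is a singleton class, producing odd-sized classes. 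Consequently the permutation-invariant \textnormal{READOUT} defined by
\begin{equation*}
\textnormal{READOUT}\big(\lms h_v^{(T)} : v \in V \rms\big) =
\begin{cases}
1 & \text{if every value has even multiplicity},\\
0 & \text{otherwise},
\end{cases}
\end{equation*}
correctly classifies symmetric dicyclic graphs, which finishes the proof.

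I expect the message-passing part to be essentially a citation of Theorems~\ref{th:3_degrees_nodes} and~\ref{th:wl_morris}, so the main obstacle is the \textnormal{READOUT} step: once the pooling has discarded node identities, only the multiset of final hidden states survives, and one must argue that symmetry is still recoverable from it. The even-multiplicity characterisation does exactly this, resting on the fixed-point-free-involution argument in the symmetric case and the singleton-colour-class argument in the asymmetric case. A minor additional point is that a single finite GNN must serve the whole domain, which is why cycle lengths are assumed bounded and the number of layers $T$ is fixed accordingly --- consistent with the dependence on the maximum cycle length already noted after Theorem~\ref{th:3_degrees_nodes}.
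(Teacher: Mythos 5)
Your proposal is correct, and the message-passing part coincides with the paper's argument: both invoke Theorem~\ref{th:3_degrees_nodes} to get that the two $3$-degree nodes share a 1--WL colour iff $m=n$, and Theorem~\ref{th:wl_morris} to realize that colouring by a GNN of the form \eqref{morris_gconv} (with enough layers, fixed via a bound on the cycle lengths --- a point the paper leaves implicit and you make explicit). Where you genuinely diverge is the \textnormal{READOUT}. The paper simply defines $\textnormal{READOUT}(c^{(T)})=1$ iff $c^{(T)}(u)=c^{(T)}(v)$ for the two $3$-degree nodes $u,v$, i.e.\ it reads off the final features at two distinguished nodes; this is short but presupposes access to node identities (locating the degree-$3$ nodes) rather than being a function of the pooled multiset alone. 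You instead construct a purely permutation-invariant readout on $\lms h_v^{(T)} : v\in V\rms$ via the characterization ``symmetric iff every final colour class has even cardinality,'' proved by the fixed-point-free cycle-swapping involution in the symmetric case and by the singleton colour classes of the two degree-$3$ nodes (degree is determined after one WL iteration, and the final partition refines it) in the asymmetric case. Both arguments are sound; the paper's buys brevity, while yours buys a readout that only consumes the multiset of final hidden states, which is closer to the pooling-style readouts actually used in \S\ref{subsec:exp_setup}, at the cost of the extra (correct) combinatorial lemma on even multiplicities.
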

\begin{proof}
Let $[m,n]$  be a dicyclic graph and $c^{(T)}$  be the stable coloring of $[m,n]$ produced by the 1-WL test with initial uniform coloring. By Theorem \ref{th:3_degrees_nodes} the graph can be correctly classified by the 1--WL test, i.e., by its stable coloring. 
Using Theorem \ref{th:wl_morris}, a GNN $f_{\Theta}$ exists such that  $f_{\Theta}$ can learn the stable coloring for each input graph for each iteration step $t$. Let $c^{(T)}$ be the stable coloring computed by a GNN for a dicyclic graph $[m,n]$. Let $(u,v)$ be the 3-degree nodes of the dicyclic graph. Then, the READOUT can be modeled as
\begin{equation*}
    \text{READOUT}(c^{(T)}) = \begin{cases}
        1 & \text{if} \; c^{(T)}(u)=c^{(T)}(v) \\
        0 & \text{otherwise}
    \end{cases}.
\end{equation*}
 With such a READOUT, the GNN assigns the correct rating to the dicyclic graph (i.e., 1 if the graph is symmetric, 0 otherwise).\end{proof}

\begin{remark}[The gap between theory and practice in Corollary~\ref{cor:gnn_topological_identity}]
\label{rem:gap_theory_practice_dicyclic}
    Corollary \ref{cor:gnn_topological_identity} shows that GNNs are powerful enough to match the 1--WL test's expressive power for the  classification of symmetric dicyclic graphs (as established by Theorem ~\ref{th:3_degrees_nodes}). However, it is worth underlining that this result only proves the \textit{existence} of a GNN model able to perform this task. In contrast to the results presented in \S\ref{subsec:id_eff_words}, this corollary does not mention any training procedure. Nevertheless, the numerical experiments  in \S\ref{subsec:exp_dicyclic} show that GNNs able to classify symmetric dicyclic graphs \textit{can} be trained in practice, albeit achieving generalization outside the training set is not straightforward and depends on the GNN architecture.
    %match the theory regardless of the additional steps in the GNN training pipeline just mentioned. 
\end{remark}

\section{Numerical results}\label{sec:experiments}

This section presents the results of experimental tasks designed to validate our theorems. We analyze the consistency between theoretical and numerical findings, highlighting the significance of specific hypotheses, and addressing potential limitations of the theoretical results.

\subsection{Experimental Setup}
\label{subsec:exp_setup}
 We take in account two different models for our analysis:

\begin{itemize}
    \item The Global Additive Pooling GNN (\textit{Gconv-glob}) applies a sum pooling at the end of the message-passing convolutional layers \cite{hamilton2017inductive}.
    In the case of the 2-letter words setting, the resulting vector $h_{\text{glob}} \in \mathbb{R}^h$ undergoes processing by a linear layer, while in the dicyclic graphs setting, an MLP is employed. A sigmoid activation function is applied at the end.
    
    \item The Difference GNN (\textit{Gconv-diff}), takes the difference between the hidden states of the two nodes in the graph (in the 2-letter words setting) or the difference between the hidden states of the 3-degree nodes (in the dicyclic graphs setting) after the message-passing convolutional layers. The resulting vector $h_{\text{diff}} \in \mathbb{R}^h$ is then fed into a final linear layer, followed by the application of a sigmoid activation function.
\end{itemize}

The choice of the last READOUT part is driven by empirical observation on their effectiveness on the two different tasks. \\
Training is performed on an Intel(R) Core(TM) i7-9800X processor running at 3.80GHz using 31GB of RAM along with a GeForce GTX 1080 Ti GPU unit. The Python code is available at \url{https://github.com/AleDinve/gnn_identity_effects.git}.

\subsection{Case study \#1: two-letter words}\label{subsec:exp_twoletter}

To validate Theorem~\ref{th:gnn_rating}, we consider a classification task using the two-letter word identity effect problem described in \S\ref{subsubsec:id_eff_application}, following the experimental setup presented in \cite{brugiapaglia2022invariance}.
 
\subsubsection{Task and datasets}

\label{subsubsec:task_data}
 In accordance with the setting of \S\ref{subsubsec:id_eff_application}, each word is represented as a graph consisting of two nodes connected by a single unweighted and undirected edge (see Figure~\ref{fig:two-letters-graph}). Each node is assigned a node feature $x \in \mathbb{R}^{26}$, corresponding to a letter's encoding.

The training set $D_{\text{train}}$ includes all two-letter words composed of any English alphabet letters except $\mathsf{Y}$ and $\mathsf{Z}$. The test set $D_{\text{test}}$ is a set of two-letter words where at least one of the letters is chosen from ${\mathsf{Y},\mathsf{Z}}$. Specifically, we consider $D_{\text{test}} = \{ \mathsf{YY}, \mathsf{ZZ}, \mathsf{YZ}, \mathsf{ZT}, \mathsf{EY}, \mathsf{SZ} \}$.

\subsubsection{Vertex feature encodings}
In our experiments, we consider four different encodings of the English alphabet, following the framework outlined in \S\ref{subsec:id_eff_words}. Each encoding consists of a set of vectors drawn from $\mathbb{R}^{26}$.

\begin{itemize}
\item \textit{One-hot encoding}: This encoding assigns a vector from the canonical basis to each letter: $\mathsf{A}$ is encoded as $e_1$, $\mathsf{B}$ as $e_2$, ..., and $\mathsf{Z}$ as $e_{26}$.
\item \textit{Haar encoding}: This encoding assigns to each letter the columns of a $26 \times 26$ orthogonal matrix drawn from the orthogonal group $\text{O}(26)$ using the Haar distribution \cite{mezzadri2007generate}.
\item \textit{Distributed encoding}: This encoding assigns a random combination of 26 bits to each letter. In this binary encoding, only $j$ bits are set to 1, while the remaining $26-j$ bits are set to 0. In our experiments, we set $j = 6$. %This choice ensures that the transformation $\tau$ does not satisfy the hypotheses of Theorem~\ref{th:rating_alphabet}.
\item \textit{Gaussian encoding}: This encoding assigns samples from the multivariate normal distribution $\mathcal{N}(0, I)$, where $0 \in \mathbb{R}^n$ and $I \in \mathbb{R}^{n \times n}$. In our experiments, we set $n = 16$. %This choice ensures that the transformation $\tau$ does not satisfy the theorem's hypothesis.
\end{itemize}
Observe that only the one-hot and the Haar encodings are orthogonal (see \S\ref{subsubsec:id_eff_application}) and hence satisfy the assumption of Theorem~\ref{th:rating_alphabet}. On the other hand, the distributed and the Gaussian encodings do not fall within the setting of Theorem~\ref{th:rating_alphabet}.

We run 40 trials for each model (i.e., Gconv-glob or Gconv-diff, defined in \S\ref{subsec:exp_setup}) with $l$ layers (ranging from 1 to 3). In each trial, a different training set is randomly generated.
The models are trained for 5000 epochs using the Adam optimizer with a learning rate of $\lambda = 0.0025$, while minimizing the binary cross-entropy loss. The hidden state dimension is set to $d = 64$, and Rectified Linear Units (ReLUs) are used as activation functions.

The numerical results are shown in Figures \ref{fig:gconvglob}--\ref{fig:gconvdiff}, where we propose two different types of plots:
\begin{figure*}[t]
\centering
\begin{subfigure}{\linewidth}
\centering
    \includegraphics[width=0.3\linewidth]{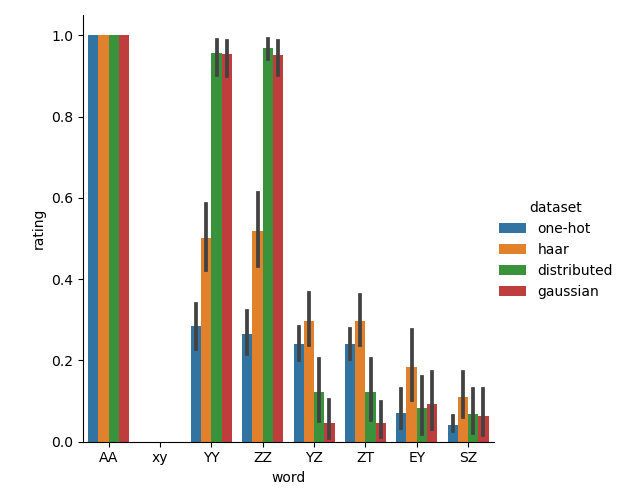}
    \includegraphics[width=0.3\linewidth]{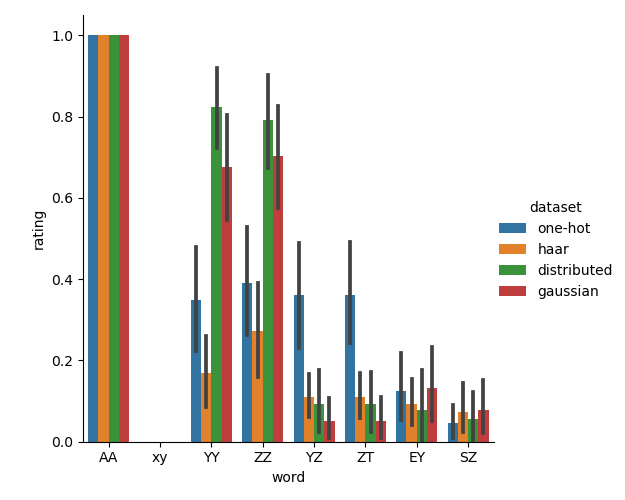}
    \includegraphics[width=0.3\linewidth]{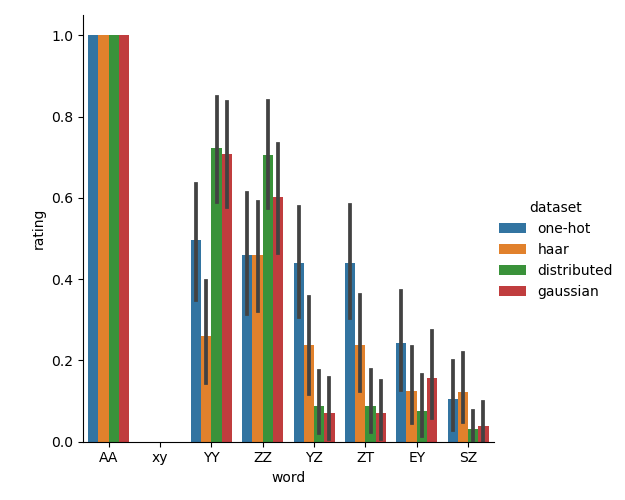}
\end{subfigure}
\begin{subfigure}{\linewidth}
\centering
    \includegraphics[width=0.3\linewidth]{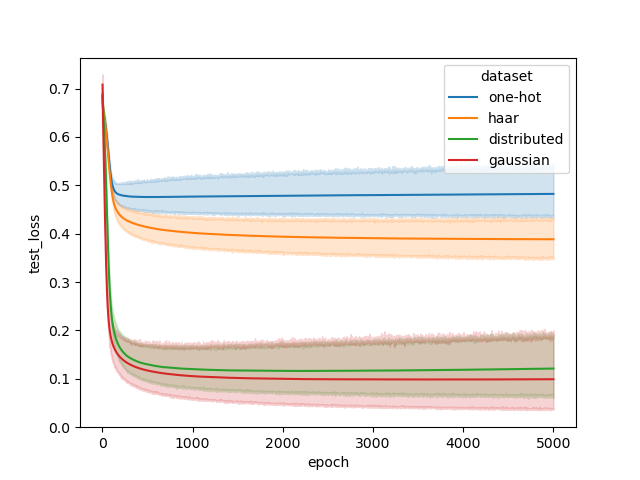}
    \includegraphics[width=0.3\linewidth]{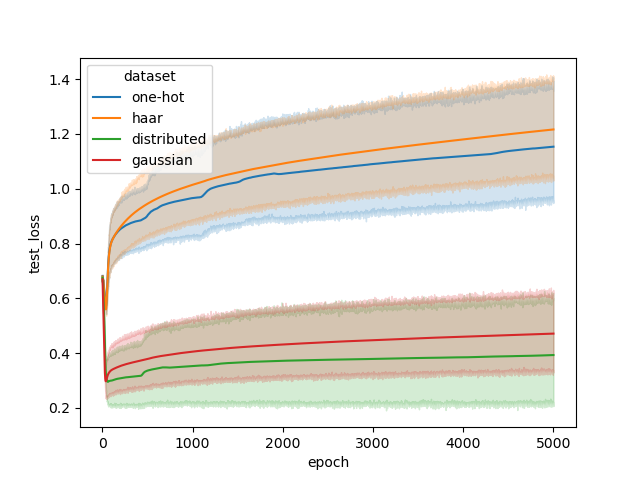}
    \includegraphics[width=0.3\linewidth]{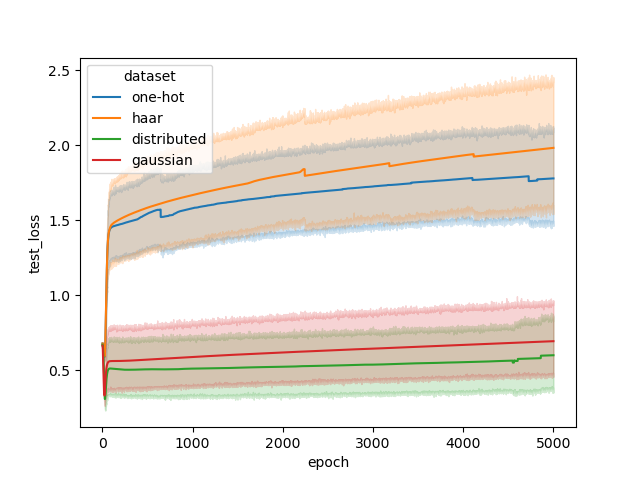}
\end{subfigure}
\caption{Numerical results for the rating task on the two-letter words dataset using Gconv-glob with $l=1,2,3$ layers. Rating should be equal to 1 if words are composed by identical letters, 0 otherwise. The distributed and Gaussian encodings, which deviate from the framework outlined in Theorem \ref{th:gnn_rating}, exhibit superior performance compared to the other encodings. The other encodings makes  the transformation matrix orthogonal and symmetric, being themselves orthogonal encodings.}
\label{fig:gconvglob}
\end{figure*}
\begin{figure*}[t]
\centering
\begin{subfigure}{\linewidth}
\centering
    \includegraphics[width=0.3\linewidth]{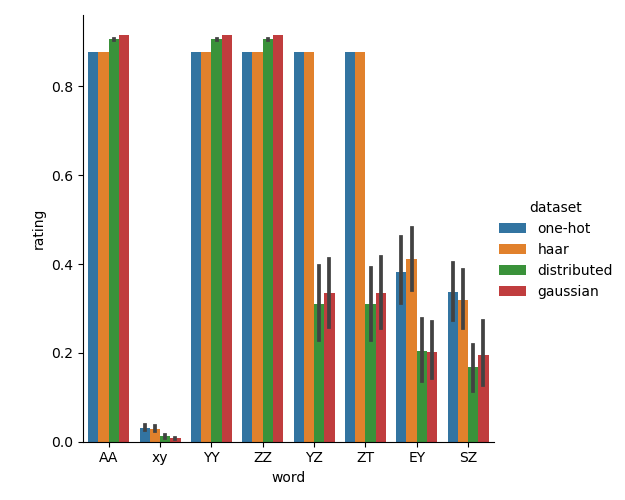}
    \includegraphics[width=0.3\linewidth]{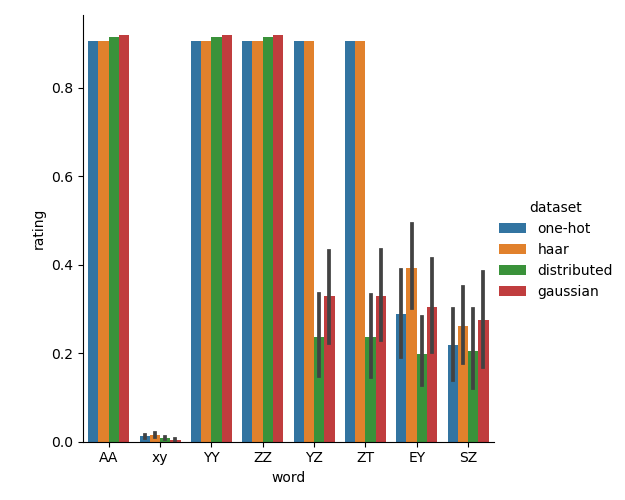}
    \includegraphics[width=0.3\linewidth]{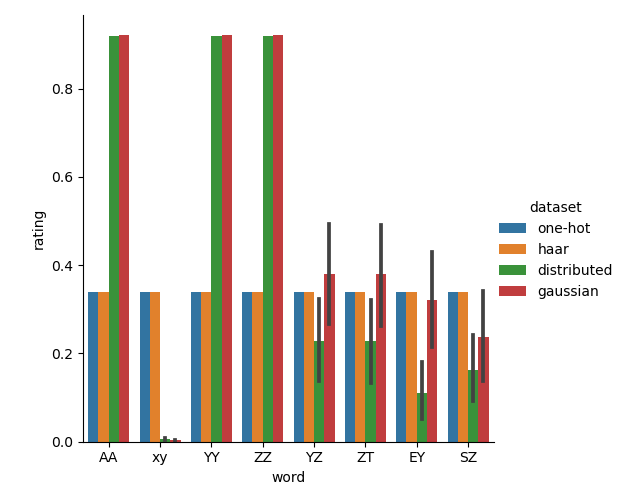}
\end{subfigure}
\begin{subfigure}{\linewidth}
    \centering
    \includegraphics[width=0.3\linewidth]{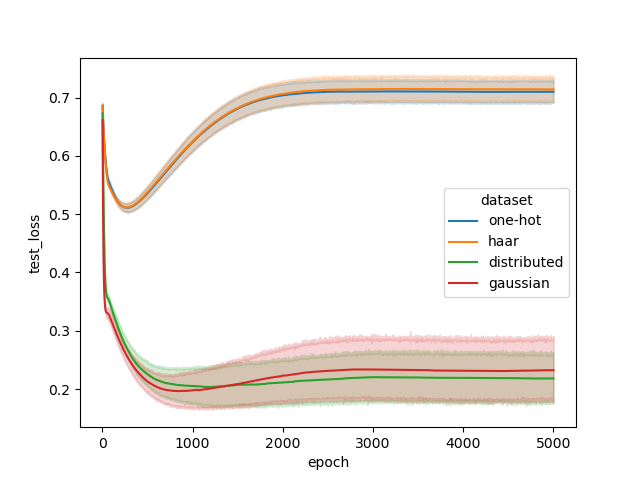}
    \includegraphics[width=0.3\linewidth]{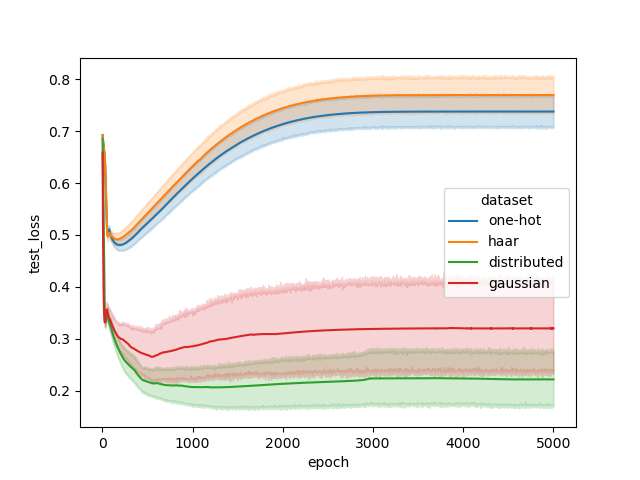}
    \includegraphics[width=0.3\linewidth]{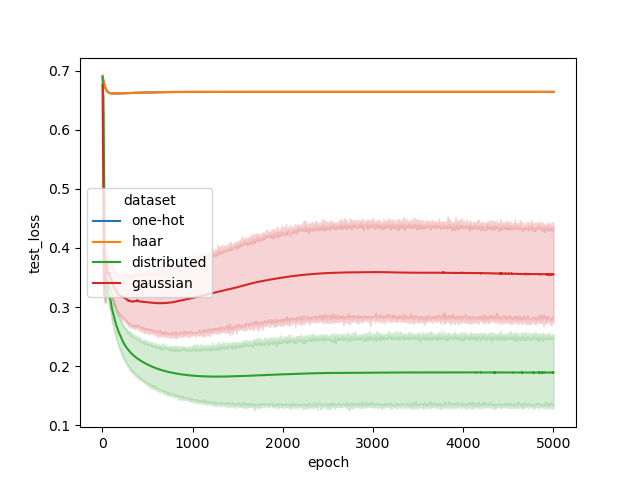}
\end{subfigure}
\caption{Numerical results for the rating task on the two-letter words dataset using Gconv-diff with $l=1,2,3$ layers. The same observations to those in Figure \ref{fig:gconvglob} can be made here as well.}
\label{fig:gconvdiff}
\end{figure*}

\begin{itemize}
\item On the top row, we compare the ratings obtained using the four adopted encodings. The first two words, $\mathsf{AA}$ and a randomly generated word with nonidentical letters, denoted $\mathsf{xy}$, are selected from the training set to showcase the training accuracy. The remaining words are taken from $D_{\text{test}}$, allowing assessment of the  generalization capabilities of the encoding scheme outside the training test. The bars represent the mean across trials, while the segments at the center of each bar represent the standard deviation.

\item On the bottom row, we show loss functions with respect to the test set over the training epochs for each encoding. The lines represent the average, while the shaded areas represents the standard deviation.
\end{itemize}

Our numerical findings indicate that the rating impossibility theorem holds true for the one-hot encoding and the Haar encoding. However, notable differences in behavior emerge for the other two encodings. The 6-bit distributed encoding exhibits superior performance across all experiments, demonstrating higher rating accuracy and better loss convergence. The Gaussian encoding yields slightly inferior results, yet still showcases some generalization capability.
It is important to note that despite variations in experimental settings such as architecture and optimizer (specifically, the use of ReLU activations and the Adam optimizer), the divergent behavior among the considered encodings remains consistent. This highlights the critical role of the transformation matrix $T_2$ within the hypothesis outlined in Theorem \ref{th:rating_alphabet}.
It is interesting to notice  that increasing the number of layers contributes to the so-called \textit{oversmoothing effect} \cite{oono2019graph,cai2020note}: many message passing iterations tend to homogenize information across the nodes, generating highly similar features.

\subsection{Case study \#2: dicyclic graphs}
\label{subsec:exp_dicyclic}

We now consider the problem of classifying unlabeled symmetric dicyclic graphs, introduced in \S\ref{subsec:id_eff_cycles}. In Corollary~\ref{cor:gnn_topological_identity} we proved the existence of GNNs able to classify symmetric dicyclic graphs. In this section, we assess whether such GNNs can be computed via training (see also Remark~\ref{rem:gap_theory_practice_dicyclic}). With this aim, we consider two experimental settings based on different choices of training and test set: an \textit{extraction task} and an \textit{extrapolation task}, summarized in Figures~\ref{fig:extraction_results} and \ref{fig:extrap_results}, respectively, and described in detail below. Each task involves running 25 trials for the Gconv-glob and Gconv-diff models defined in \S\ref{subsec:exp_setup}. The number of layers in each model is determined based on the specific task.

The models are trained over 5000 epochs using a learning rate of $\lambda = 0.001$. We employ the Adam optimizer, minimizing the binary crossentropy, and incorporate the AMSGrad fixer \cite{reddi2019convergence} to enhance training stability due to the large number of layers. Labels are all initialized uniformly as $h_v^{(0)}=1$ for each node in each graph.  The hidden state dimension is set to $d=100$, and ReLU activation functions are utilized.

The results presented in Figures \ref{fig:wl}, \ref{fig:extraction_results}, and \ref{fig:extrap_results} should be interpreted as follows: each circle represents a dicyclic graph $[m,n]$; the color of the circle corresponds to the rating, while the circle's radius represents the standard deviation.

\begin{figure}[t]
\begin{subfigure}{0.23\textwidth}
\centering
\includegraphics[width=\linewidth]{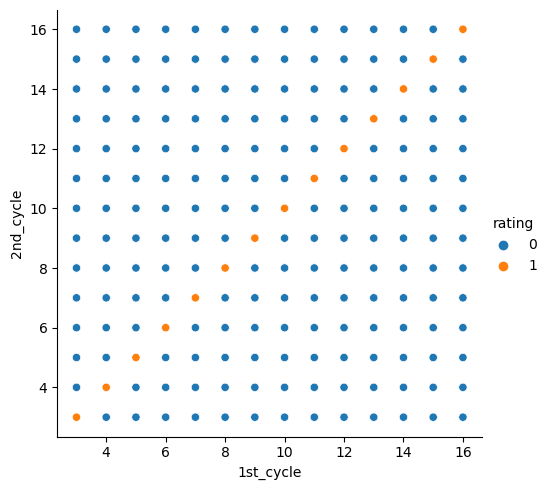}
\caption{$n_{\max}=16$}
\end{subfigure}
\begin{subfigure}{0.23\textwidth}
\centering
\includegraphics[width=\linewidth]
{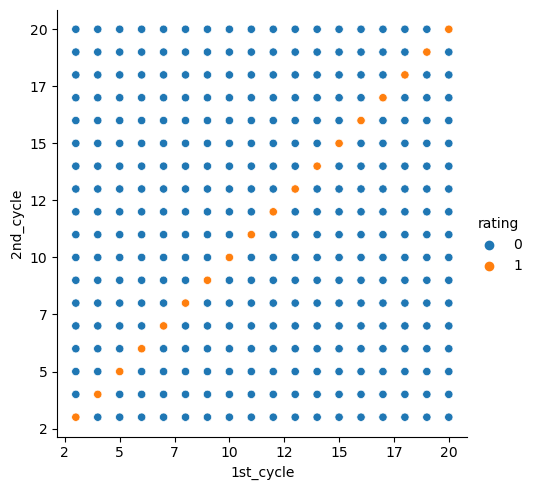}
\caption{$n_{\max}=20$}
\end{subfigure}
\caption{Perfect classification of symmetric dicyclic graphs by $n_{\max}$  iterations of the 1-WL test.}
\label{fig:wl}
\end{figure}

\subsubsection{1--WL test performance}
In Theorem~\ref{th:3_degrees_nodes} we showed that the 1--WL test can classify symmetric dicyclic graphs. This holds true regardless of the length of the longer cycle, provided that a sufficient number of iterations is performed.
The results in Figure~\ref{fig:wl} show that the 1--WL test achieves indeed perfect classification accuracy in $n_{\max}$ iterations, where $n_{\max}$ is the maximum length of a cycle in the dataset, in accordance with Theorem~\ref{th:3_degrees_nodes}.

\subsubsection{Extraction task}

In this task, we evaluate the capability of GNNs to generalize to unseen data, specifically when the minimum length of cycles in the test dataset is smaller than the maximum length of those in the training dataset. More specifically, the training set $D_{\text{train}}$ consists of pairs $[m,n]$ where $3 \leq m,n \leq n_{\max}$ and $m,n \neq k$ with $3 \leq k \leq n_{\max}$, while the test set $D_{\text{test}}$ comprises pairs $[k,a]$ with $3 \leq a \leq n_{\max}$. Figure~\ref{pic:extraction} illustrates this setting.
\begin{figure}[t]
    \centering
    \includegraphics[width = 0.43\textwidth]{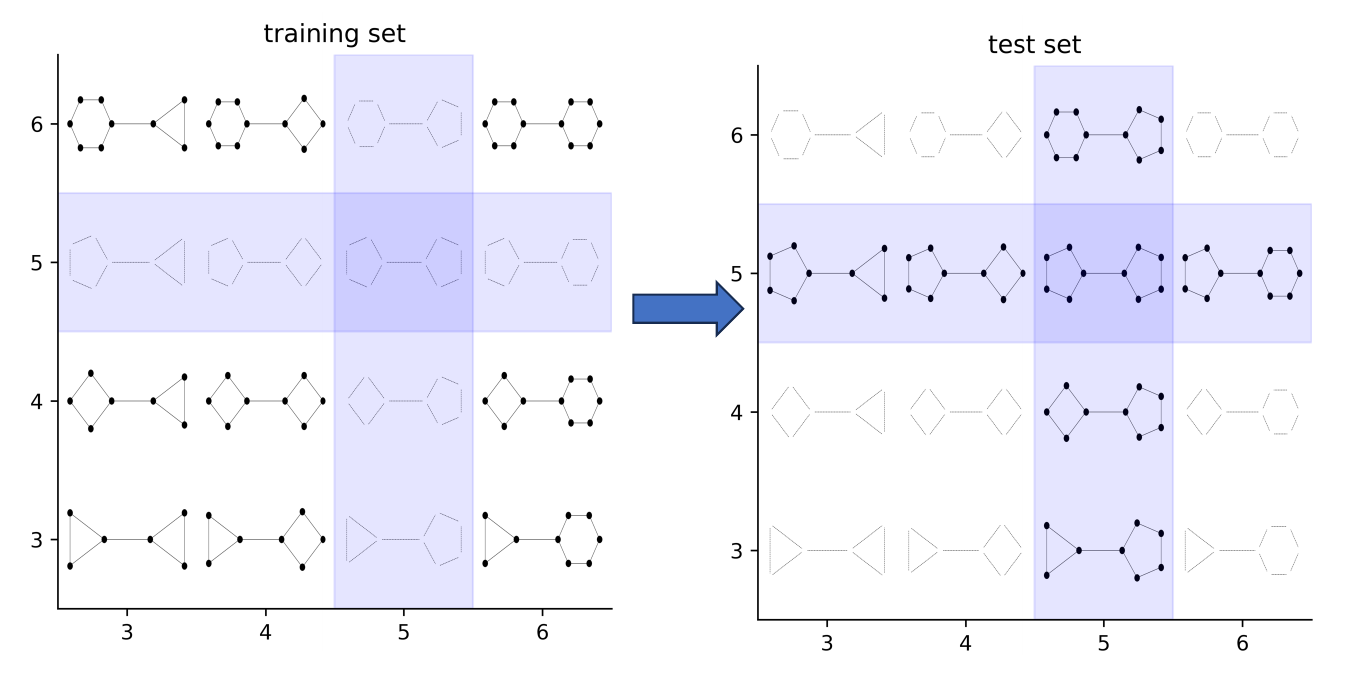}
    \caption{Graphical illustration of the extraction task. In this example, $n_{\max}=6$ and $k=5$. }
    \label{pic:extraction}
\end{figure}

In our experiments, we set $n_{\max}=8$ and consider $k$ values of 7, 6, and 5. In this setting, $|D_{\text{test}}|=(8-2)\cdot 2-1=11$ and $|D_{\text{train}}| = (8-2)^2 - |D_{\text{test}}| = 25$. The number of GNN layers is $l = n_{\max}$. The numerical results are presented in Figure \ref{fig:extraction_results}. 
\begin{figure}[t]
\centering
\begin{subfigure}{\linewidth}
\centering
\includegraphics[width=0.45\linewidth]{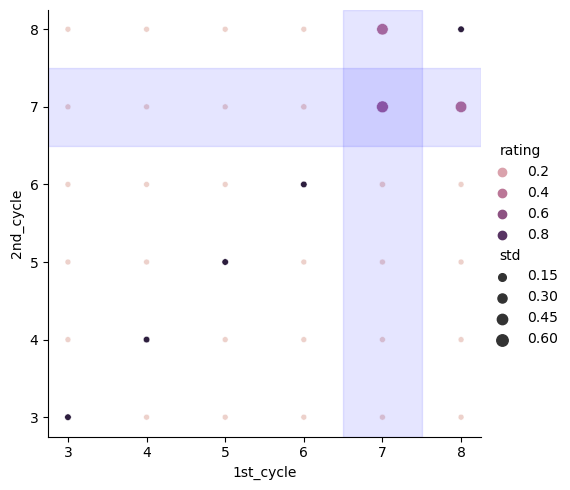}
\includegraphics[width=0.45\linewidth]{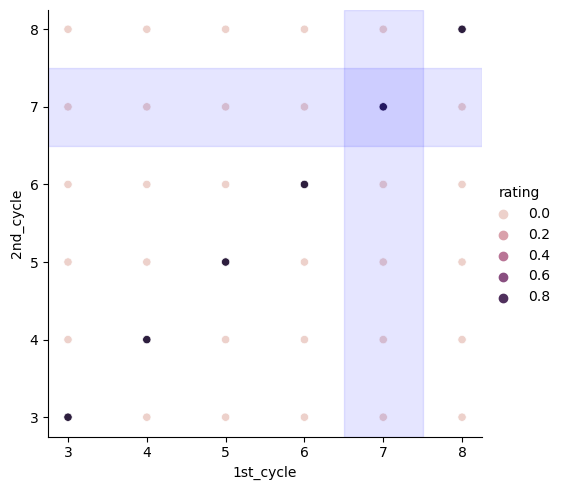}
\end{subfigure}
\begin{subfigure}{\linewidth}
\centering
\includegraphics[width=0.45\linewidth]{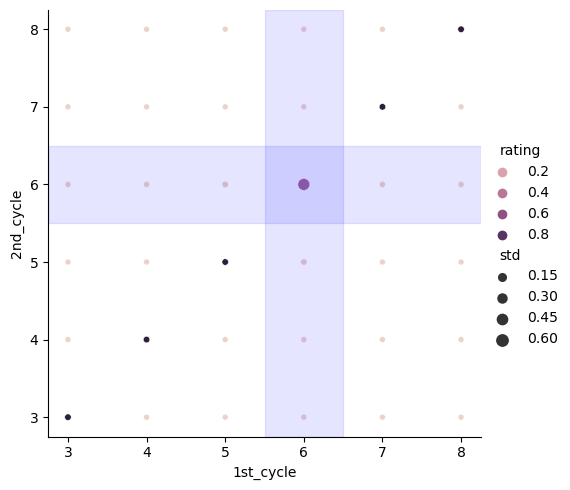}
\includegraphics[width=0.45\linewidth]{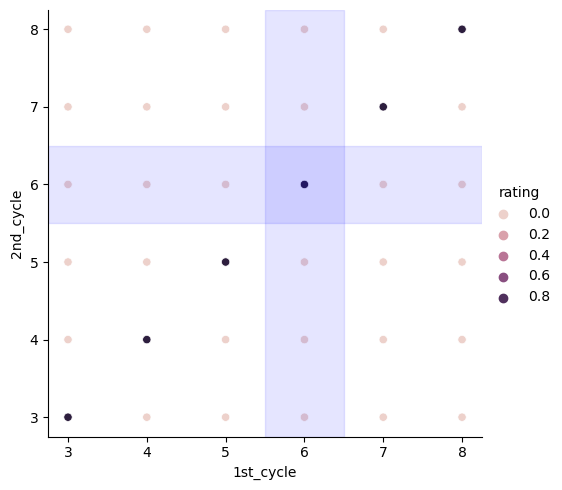}
\end{subfigure}
\begin{subfigure}{\linewidth}
\centering
\includegraphics[width=0.45\linewidth]{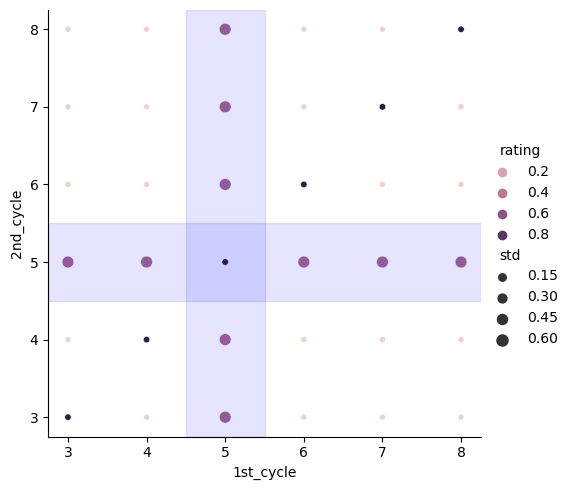}
\includegraphics[width=0.45\linewidth]{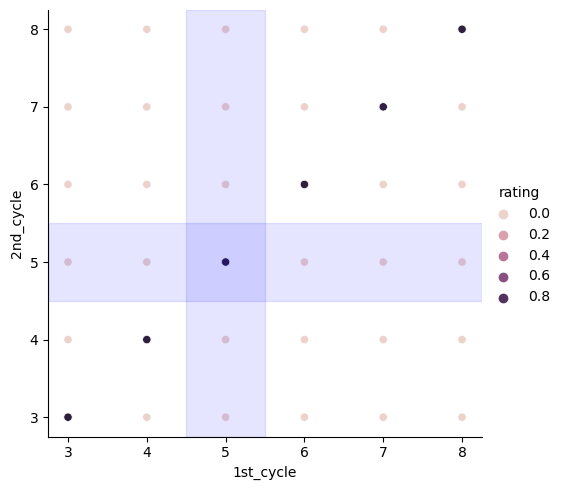}
\end{subfigure}
\caption{Extraction task performed by different GNN models, namely Gconv-glob (left) and Gconv-diff (right). We set $n_{\max}=8$, $l= 8$ and, from top to bottom, $k=7,6,5$ .}
\label{fig:extraction_results}
\end{figure}
We observe that the Gconv-diff model achieves perfect performance in our experiments (standard deviation values are not reported because they are too low), showing consistence with the theoretical setting. On the other hand, the Gconv-glob model demonstrates good, but not perfect, performance on the test set. A critical point in our numerical examples seems to be $k=5$, which falls in the middle range between the minimum and maximum cycle lengths in the training set (3 and 8, respectively). This particular value is closer to the minimum length, indicating a relatively unbalanced scenario. 

Overall, the different performance of Gconv-diff and Gconv-glob on the extraction task shows that, despite the theoretical existence result proved in Corollary~\ref{cor:gnn_topological_identity}, the choice of architecture is crucial for achieving successful generalization.

\subsubsection{Extrapolation task}
In this task, we assess GNNs' ability to generalize to unseen data with cycle lengths exceeding the maximum length in the training dataset. Specifically, the training set $D_{\text{train}}$ comprises pairs $[m,n]$ where $3 \leq m,n \leq n_{\max}$, while the test set $D_{\text{test}}$ consists of pairs $[n_{\max} + k, n']$ with $0 < k \leq g$ and $3 \leq n' \leq n_{\max} + g$. Figure~\ref{pic:extrap} illustrates the extrapolation task.
\begin{figure}[t]  
\centering \includegraphics[width=0.43\textwidth]{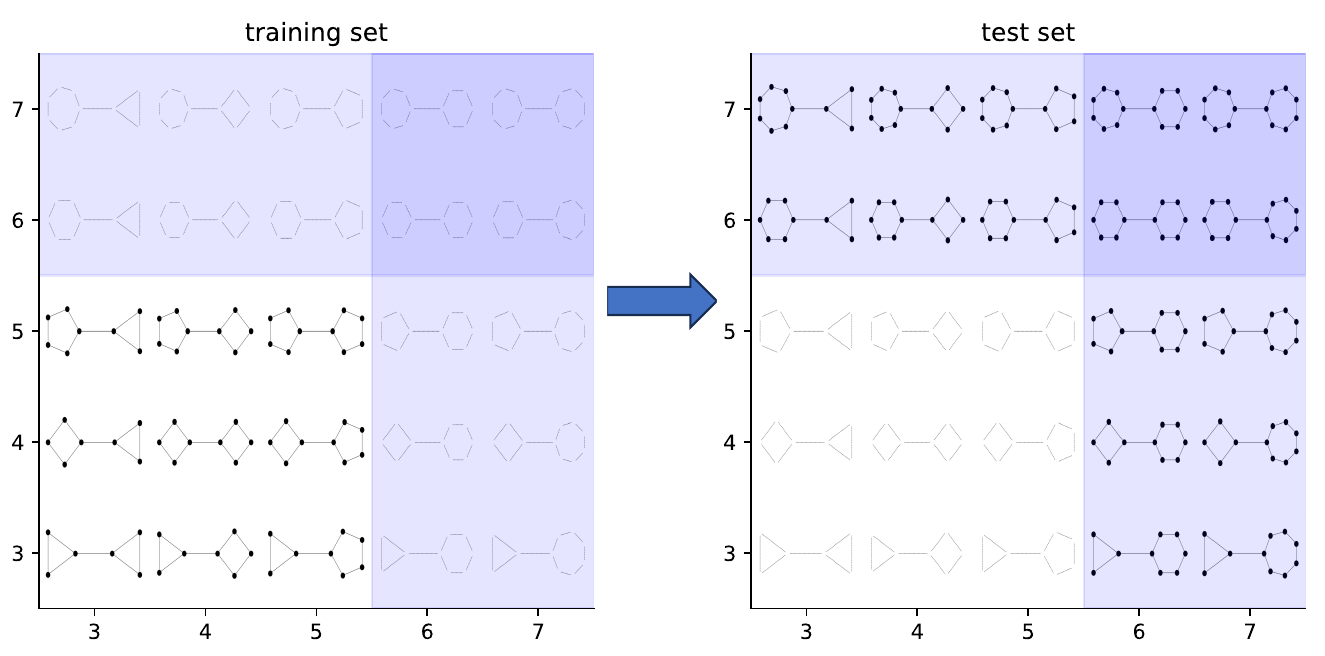}
    \caption{Graphical illustration of the extrapolation task. In this example, $n_{\max}=5$ and $g=2$. }
    \label{pic:extrap}
\end{figure}

In our experiments, we set $n_{\max}=8$ and consider values of $g$ as 1, 2, and 3. The number of GNN layers is  $l = n_{\max} + g$. Therefore, $|D_{\text{train}}| = (8-2)^2 = 36$, $|D_{\text{test},g=1}| = (9-2)\cdot2-1 = 13 $, $|D_{\text{test},g=2}| = (10-2)\cdot 4-4 = 28$ and $|D_{\text{test},g=3}| = (11-2)\cdot6-9= 45 $. Numerical results are presented in Figure~\ref{fig:extrap_results}. 
\begin{figure}[t]
\centering
\begin{subfigure}{\linewidth}
\centering
\includegraphics[width=0.45\linewidth]{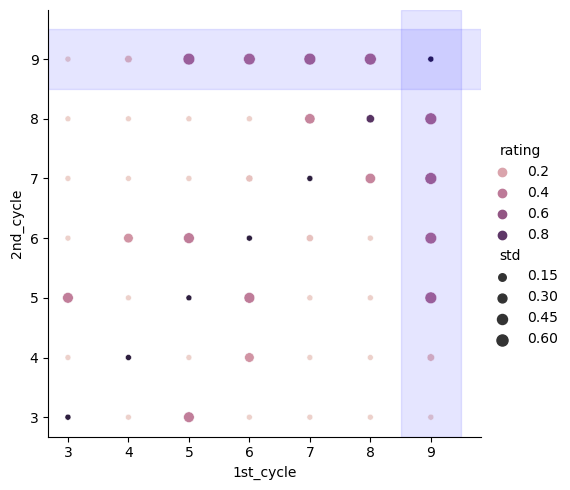}
\includegraphics[width=0.45\linewidth]{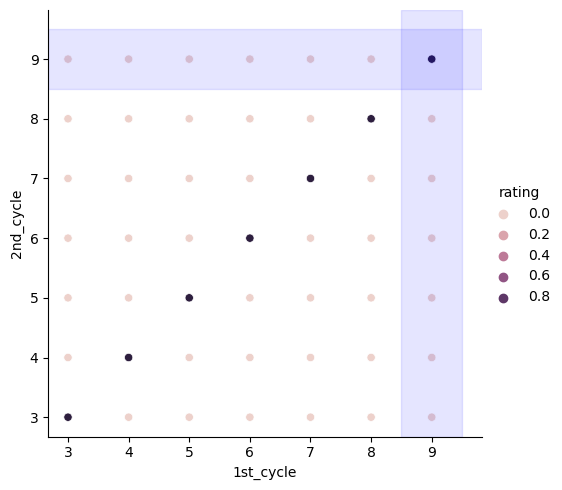}
\end{subfigure}
\begin{subfigure}{\linewidth}
\centering
\includegraphics[width=0.45\linewidth]{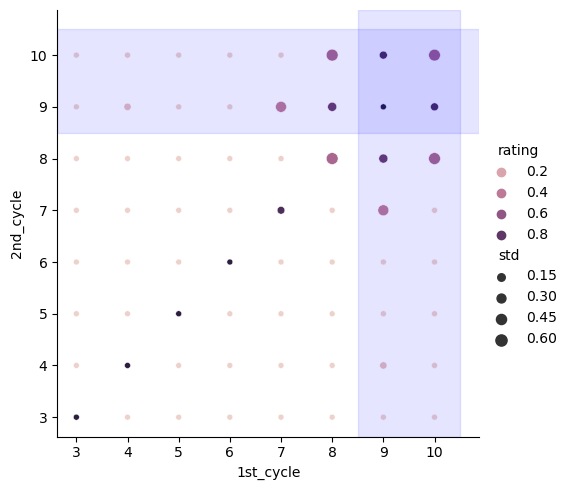} 
\includegraphics[width=0.45\linewidth]{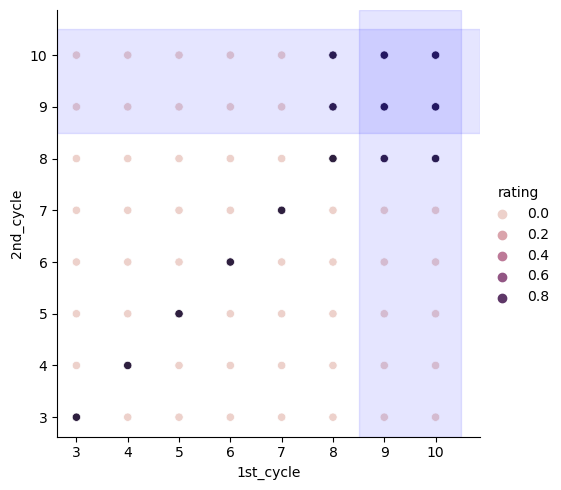}
\end{subfigure}
\begin{subfigure}{\linewidth}
\centering
\includegraphics[width=0.45\linewidth]{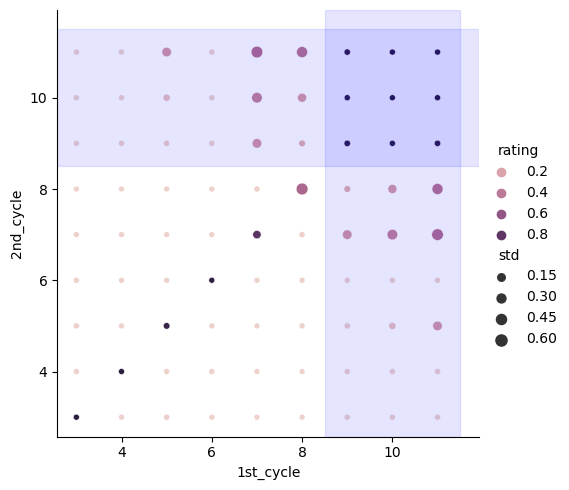}
\includegraphics[width=0.45\linewidth]{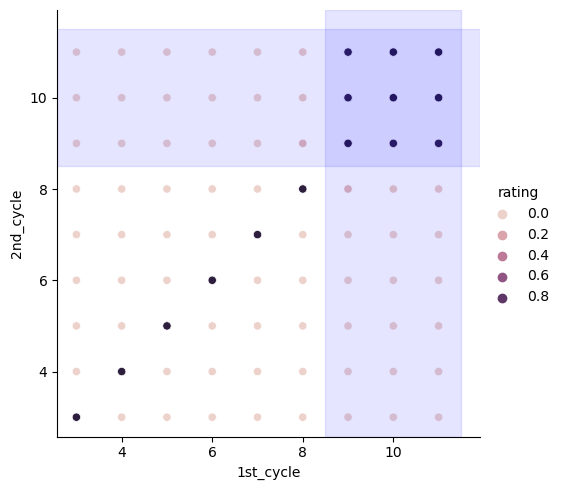}
\end{subfigure}
\caption{Extrapolation task performed by different GNN models, namely Gconv-glob (left) and Gconv-diff (right). We set $n_{\max}=8$, $l= 8$ and, from top to bottom, $(l,g) = (9,1), (10,2), (11,3)$ .}
\label{fig:extrap_results}
\end{figure}
In the extraction task, both models achieved perfect training accuracy. Conversely, in the extrapolation task, the Gconv-glob model struggles to classify the training set accurately, especially when the number of layers is equal to 9. This behavior may be attributed to the homogeneous nature of sum pooling at the end of the message passing, as it does not take into account the role of 3-degree nodes (which play a key role in our theory, as illustrated by Theorem~\ref{th:3_degrees_nodes} and Corollary~\ref{cor:gnn_topological_identity}).

On the other hand, the Gconv-diff model consistently achieves perfect training accuracy over the training set and achieves perfect generalization for $g=1$, showing once again the importance of architecture choice in practice. However, when $g \geq 2$ there is a noticeable region of misclassification for pairs $[m,n]$ where $m,n \geq n_{\max}$. This behavior could be explained by the limited capacity of the hidden states, but the optimization process might also play a significant role. %, similar to what occurs in the two-word letter framework. 
Moreover, for $g \geq 2$ the numerical results of the extrapolation task resemble the rating impossibility phenomenon observed in the two-letter words framework. However, it is important to note that, at least for the Gconv-diff model, we observe significantly different ratings between graphs $[m,n_{\max}+g]$ where $m < n_{\max}$ and graphs $[n_{\max}+i,n_{\max}+j]$ with $i,j > 0$. In contrast, in the two-letter words framework ratings typically do not exhibit such a consistent and distinguishable pattern.

\section{Conclusions}\label{sec:conclusions}

% Mirco's version:

This work extensively investigates the generalization capabilities of GNNs when learning identity effects through a combination of theoretical and experimental analysis.
From the theoretical perspective, in Theorem~\ref{th:rating_alphabet} we established that GNNs, under mild assumptions, cannot learn identity effects when orthogonal encodings are used in a specific two-letter word classification task. On the positive side, in Corollary~\ref{cor:gnn_topological_identity} we showed the existence of GNNs able to successfully learn identity effects on dicyclic graphs, thanks to the expressive power of the Weisfeiler-Lehman test (see Theorem~\ref{th:3_degrees_nodes}). 
The experimental results strongly support these theoretical findings and provide valuable insights into the problem. In the case of two-letter words, our experiments highlight the key influence of encoding orthogonality on misclassification behavior. Our experiments on dicyclic graphs demonstrate the importance of architecture choice in order to achieve generalization.

Several directions of future research naturally stem from our work. First, while Theorem~\ref{th:rating_alphabet} identifies sufficient conditions for rating impossibility, it is not known whether (any of) these conditions are also necessary. Moreover, numerical experiments on two-letter words show that generalization outside the training set is possible when using nonorthogonal encodings; justifying this phenomenon from a theoretical perspective is an open problem. On the other hand, our numerical experiments on dicyclic graphs show that achieving generalization depends on choice of the architecture; this suggests that rating impossibility theorems might hold under suitable conditions on the GNN architecture in that setting. 
Another interesting open problem is the evaluation of GNNs' expressive power on more complex graph domains. In particular, conducting extensive experiments on molecule analyses mentioned in \S\ref{sec:intro}, which naturally exhibit intricate structures, could provide valuable insights into modern chemistry and drug discovery applications.

\section*{Acknowledgments}
 GAD is partially supported by the Gruppo Nazionale per il Calcolo Scientifico (GNCS) of the Istituto Nazionale di Alta Matematica (INdAM).
SB acknowledges the support of the
Natural Sciences and Engineering Research Council of Canada (NSERC) through grant RGPIN-2020-06766 and the Fonds de Recherche du Qu\'ebec Nature et Technologies (FRQNT) through grant 313276. 
MR acknowledges the support of NSERC through grant 263344. The authors are thankful to Aaron Berk for providing feedback on a preliminary version of this manuscript and to Kara Hughes for fruitful discussions on applications to the molecule domain.

%% The Appendices part is started with the command \appendix;
%% appendix sections are then done as normal sections
%% \appendix

%% \section{}
%% \label{}

%% If you have bibdatabase file and want bibtex to generate the
%% bibitems, please use
%%
%%  \bibliographystyle{elsarticle-num} 
%%  \bibliography{<your bibdatabase>}

%% else use the following coding to input the bibitems directly in the
%% TeX file.

\bibliographystyle{model1-num-names}
%\bibliographystyle{cas-model2-names}

% Loading bibliography database
%\bibliography{cas-refs}
\bibliography{references} 
% \appendix
% \include{sections/appendix}
% To print the credit authorship contribution details
\printcredits

% % Biography
% \bio{}
% % Here goes the biography details.
% \endbio

% \bio{pic1}
% % Here goes the biography details.
% \endbio

\end{document}